\documentclass{article}

\usepackage{microtype}
\usepackage{graphicx}
\usepackage{subcaption}
\usepackage{booktabs} % for professional tables
\usepackage{xcolor}
\usepackage{color}
\usepackage{wrapfig}

\usepackage{hyperref}
\usepackage[preprint]{icml2026}

\usepackage{booktabs}
\usepackage{multirow}
\usepackage{float}
\usepackage{makecell}
\usepackage{threeparttable}

\usepackage{amsmath}
\usepackage{amssymb}
\usepackage{mathtools}
\usepackage{amsthm}

\usepackage[capitalize, noabbrev]{cleveref}

\theoremstyle{plain}
\newtheorem{theorem}{Theorem}[section]
\newtheorem{proposition}[theorem]{Proposition}
\newtheorem{lemma}[theorem]{Lemma}

\theoremstyle{definition}

\theoremstyle{remark}

\usepackage[linesnumbered,ruled,vlined,algo2e]{algorithm2e}

\newcommand{\dlt}[1]{\textcolor{gray}{#1}}

\newcommand{\our}[1]{\texttt{3D-Pruner}}

\usepackage[textsize=tiny]{todonotes}

\icmltitlerunning{Exploring 3D Dataset Pruning}

\begin{document}
\twocolumn[\icmltitle{Exploring 3D Dataset Pruning} \icmlsetsymbol{equal}{*}

\begin{icmlauthorlist}
  \icmlauthor{Xiaohan Zhao}{mbzuai}
  \icmlauthor{Xinyi Shang}{mbzuai}
  \icmlauthor{Jiacheng Liu}{mbzuai}
  \icmlauthor{Zhiqiang Shen}{mbzuai}
\end{icmlauthorlist}

\icmlaffiliation{mbzuai}{VILA Lab, Department of Machine Learning, MBZUAI}

\icmlcorrespondingauthor{Zhiqiang Shen}{zhiqiang.shen@mbzuai.ac.ae}

% You may provide any keywords that you find helpful for describing your
% paper; these are used to populate the "keywords" metadata in the PDF but
% will not be shown in the document
\icmlkeywords{Machine Learning, ICML} \vskip 0.3in ]

% this must go after the closing bracket ] following \twocolumn[ ...

% This command actually creates the footnote in the first column listing the
% affiliations and the copyright notice. The command takes one argument, which
% is text to display at the start of the footnote. The \icmlEqualContribution
% command is standard text for equal contribution. Remove it (just {}) if you
% do not need this facility.

% Use ONE of the following lines. DO NOT remove the command.
% If you have no special notice, KEEP empty braces:
\printAffiliationsAndNotice{} % no special notice (required even if empty)
% Or, if applicable, use the standard equal contribution text:
% \printAffiliationsAndNotice{\icmlEqualContribution}

\begin{abstract}
Dataset pruning has been widely studied for 2D images to remove redundancy and accelerate training, while particular pruning methods for 3D data remain largely unexplored. In this work, we study dataset pruning for 3D data, where its observed common long-tail class distribution nature make optimization under conventional evaluation metrics {\em Overall Accuracy (OA)} and {\em Mean Accuracy (mAcc)} inherently conflicting, and further make pruning particularly challenging. To address this, we formulate pruning as approximating the full-data expected risk with a weighted subset, which reveals two key errors: {\em coverage error} from insufficient representativeness and {\em prior-mismatch bias} from inconsistency between subset-induced class weights and target metrics. We propose {\em representation-aware subset selection} with per-class retention quotas for long-tail coverage, and {\em prior-invariant teacher supervision} using calibrated soft labels and embedding-geometry distillation. The retention quota also serves as a switch to control the {\em OA-mAcc} trade-off. Extensive experiments on 3D datasets show that our method can improve both metrics across multiple settings while adapting to different downstream preferences.
Our code is available on \href{https://github.com/XiaohanZhao123/3D-Dataset-Pruning}{Github}.
\end{abstract}

\section{Introduction}

Dataset pruning~\citep{paul2021deep,sener2017active}, also known as coreset selection, is a well-established approach for reducing redundancy in large-scale training sets. In 2D image classification, many data pruning strategies have been extensively studied, including geometry-based clustering~\citep{sinha2019variational}, gradient matching~\citep{mirzasoleiman2020coresets_gradmatch1,killamsetty2021grad_gradmatch2}, and error-based importance estimation~\citep{paul2021deep,toneva2018empirical}. These methods can significantly accelerate training and reduce computational cost, and have been adopted in downstream scenarios such as hyper-parameter tuning~\cite{yang2022dataset} and continual learning~\cite{hao2023bilevel}. More recently, dataset pruning has also been combined with knowledge distillation~\cite{chen2025medium,ben2024distilling}, where a teacher trained on the full dataset guides both subset selection and post-pruning optimization.

% --- FIGURE: ShapeNet55 Distribution ---
\begin{figure}[t]
\centering
\includegraphics[width=0.99\linewidth]{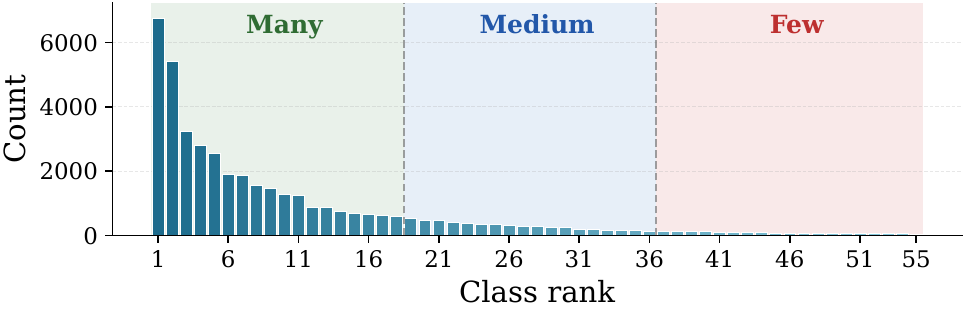}
\vspace{-1mm} 
\captionof{figure}{Grouped class distribution of ShapeNet55.}
\label{fig:shapenet_dist}
\vspace{1mm} 

\captionof{table}{Statistics of 3D imbalance on \textit{both} train and test set.} \label{tab:3d statistics}
\label{tab:dataset_stats}
\vspace{-3mm} 
\resizebox{0.99\linewidth}{!}{%
  \begin{tabular}{l cccc cccc}
    \toprule
    & \multicolumn{4}{c}{Train Set} & \multicolumn{4}{c}{Test Set} \\
    \cmidrule(lr){2-5} \cmidrule(lr){6-9}
    Dataset & Total & Max & Min & Ratio & Total & Max & Min & Ratio \\
    \midrule
    ScanObjectNN & 11,416 & 1,585 & 298 & 5.3$\times$   & 2,882  & 390   & 83 & 4.7$\times$ \\
    ModelNet40   & 9,840  & 889   & 64  & 13.9$\times$  & 2,468  & 100   & 20 & 5.0$\times$ \\
    ShapeNet55   & 41,952 & 6,747 & 44  & 153.3$\times$ & 10,518 & 1,687 & 12 & 140.6$\times$ \\
    \bottomrule
  \end{tabular}%
}
\vspace{-1.5em}
\end{figure}

Despite this progress, dataset pruning for 3D data remains unexplored as 3D datasets are expensive to build, and their class frequencies are rarely balanced by design. In practice, 3D data are often collected through manual modeling, such as CAD repositories, or through real-world scanning, so both training and test sets naturally inherit long-tailed distributions shaped by object frequency or dataset construction preferences. As a result, severe class imbalance is common in 3D benchmarks (see Fig.~\ref{fig:shapenet_dist} and Tab.~\ref{tab:3d statistics}), making pruning substantially more challenging than in standard 2D settings.

A key difficulty in 3D pruning is the tension between two widely used evaluation metrics: mean class accuracy (mAcc) and overall accuracy (OA). mAcc measures balanced performance across classes, while OA reflects expected utility under the naturally imbalanced test distribution. In 3D benchmarks, OA should not be viewed merely as a biased alternative to mAcc, when the test set is itself long-tailed, OA is the empirical estimate of performance under real query frequencies. Therefore, a practical pruning method for 3D data should account for both metrics, rather than optimizing only one at the expense of the other.

We argue that this challenge should not be addressed by making an early trade-off between OA and mAcc. Instead, the first step is to identify principles that are robust across different target priors, and only then introduce mechanisms for preference adjustment. To this end, we formulate pruning as a quadrature approximation of population risk and decompose the resulting error into two terms: \textit{representation error}, which captures how well the selected subset covers the underlying data manifold, and \textit{prior-mismatch bias}, which arises when the class distribution induced by the pruned subset differs from the distribution implied by the target evaluation metric. This perspective shows that no single subset is optimal for all priors, but also reveals shared optimization directions that can improve performance under both OA and mAcc.

Based on our theoretical analysis, we propose \texttt{3D-Pruner}. To reduce prior-mismatch bias, we decouple prior-robust class-conditional structure from prior-dependent offsets, and transfer this structural information through knowledge distillation. Specifically, we use calibrated soft targets together with geometry- and relation-preserving distillation to make supervision less sensitive to class-frequency bias. To reduce representation error, we analyze different selection signals under imbalanced 3D data and find that classifier-derived scalar scores are strongly correlated with class size, while embedding geometry is substantially more stable. We therefore perform subset selection based on representation coverage in the embedding space, together with a small per-class safety quota to preserve long-tail coverage and secure a strong shared performance floor across priors. Finally, since different downstream applications may prefer different evaluation priors, we introduce a lightweight steering wrapper that interpolates between stratified seeding and global selection, enabling flexible control over the trade-off between mAcc and OA for better downstream performance.

Our contributions are summarized as follows: 

\textbf{(i)} We identify a core challenge in 3D dataset pruning under long-tailed train/test splits: OA and mAcc reflect different yet practically important evaluation priors, making effective pruning fundamentally difficult. \\
\textbf{(ii)} We formulate pruning as quadrature over population risk and decompose its error into representation error and prior-mismatch bias, providing a prior-robust perspective that applies across different target priors. \\
\textbf{(iii)} Based on the analysis, we propose \texttt{3D-Pruner}, a 3D data pruning framework that improves the performance through a simple steering wrapper. To the best of our knowledge, this is the \textit{\textbf{first}} principled study of 3D dataset pruning.

\begin{figure*}[t]
\centering
\includegraphics[width=0.96\linewidth]{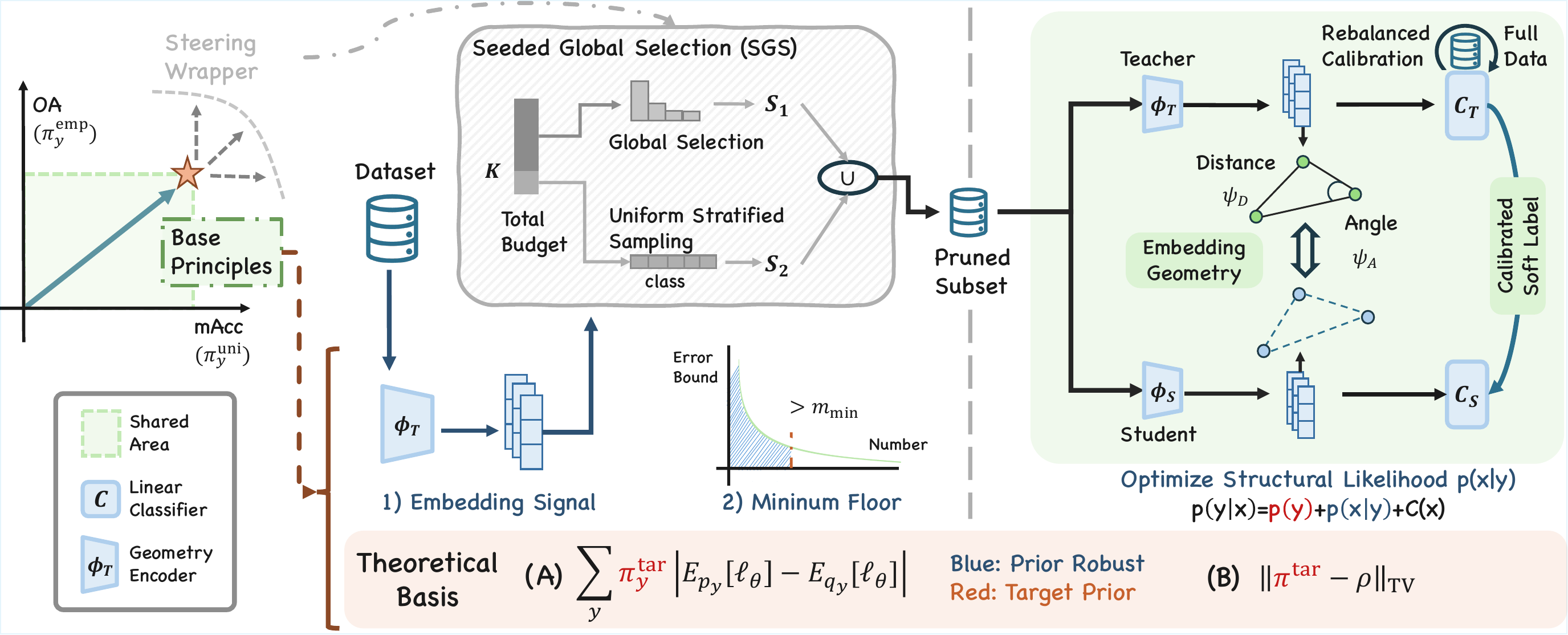}
\caption{Illustration of our \our{} framework, which comprises: (1) base principles (utilizing embedding signals, minimum floor selection, and optimizing structural likelihood in post-pruning training) that remain robust and beneficial across different priors, derived from theoretical analysis of the shared region, and (2) a steering warper that balances between the two priors.}
\label{fig:main_alg}
\end{figure*}

\section{Background} \label{background}

\textbf{3D Datasets.}
3D recognition benchmarks are essential for geometric learning, yet such data are costly to acquire and curate. Widely used datasets derive from two pipelines: curated CAD repositories (e.g., ModelNet40, ShapeNet~\citep{wu20153d_modelnet,chang2015shapenet,yi2016scalable_shapenet55}) and real-world scanning with depth sensors (e.g., ScanObjectNN~\citep{uy2019revisiting_scanobject}). Both share a common property: class frequencies are shaped by object availability and annotation cost rather than deliberate control, leading to long-tail imbalance in both training and evaluation splits (Fig.~\ref{fig:shapenet_dist}, Tab.~\ref{tab:3d statistics}).

This setting motivates reporting two complementary metrics driven by different \emph{evaluation priors}. Let $C$ be the number of classes, $N$ the total test examples, and $n_y$ the test examples in class $y$. Define per-class accuracy $\mathrm{Acc}_y:=\frac{1}{n_y}\sum_{i:y_i=y}\mathbf{1}\{\hat{y}_i=y_i\}$. Then: \\
\textbf{\textit{Overall accuracy (OA)}} with empirical prior $\pi^{\mathrm{emp}}_y=n_y/N$:
\[
\text{OA}=\sum_y \pi^{\mathrm{emp}}_y\,\mathrm{Acc}_y.
\]
OA reflects \textit{frequency as utility}: users predominantly query common objects (e.g., doors, tables), so high OA indicates reliability for daily use with presumably higher tolerance of rare cases like a strange ancient vase, where the system's value is determined by the cumulative success rate of user interactions. \\
\textbf{\textit{Mean class accuracy (mAcc)}} with uniform prior $\pi^{\mathrm{uni}}_y=\frac{1}{C}$:
\[
\text{mAcc}=\sum_y \pi^{\mathrm{uni}}_y\,\mathrm{Acc}_y.
\]
mAcc reflects \textit{capability versatility}: every category is equally important regardless of prevalence. High mAcc certifies discriminative features for all defined concepts, not merely exploiting frequency priors. Importantly, OA here should not be conflated with dense prediction settings where a dominant ``background'' label inflates overall scores. In 3D classification benchmarks, head classes correspond to real object categories that appear more often in scans or are more frequently modeled to meet market preference, yet still exhibit substantial intra-class variation. Thus, 3D models routinely report both metrics as equally legitimate but differently biased targets~\citep{qi2017pointnet++,qian2022pointnext,deng2023pointvector}. This duality is central for pruning, since subset selection can implicitly change the effective class prior and shift the balance between these two objectives.

\textbf{Data Pruning.} Dataset pruning, also termed as coreset selection, aims to identify a compact subset of training examples that preserves most of the learning signal of the full dataset, thereby reducing training cost. Existing methods span coverage and diversity-based selection, such as herding, k-center style objectives, and submodular subset selection~\cite{welling2009herding,sener2017active_kcenter,wei2015submodularity}, as well as gradient-based matching that approximates the optimization trajectory induced by the full data~\cite{mirzasoleiman2020coresets_gradmatch1,killamsetty2021grad_gradmatch2}. Another line of work ranks examples using training dynamics or error statistics and prunes those deemed redundant or uninformative~\cite{toneva2018empirical,paul2021deep_el2n}. Recently, teacher signals from knowledge distillation~\cite{hinton2015distilling}, especially soft labels, have also been incorporated into data pruning and subset training, improving the robustness and performance of pruned learning~\cite{hinton2015distilling,ben2024distilling,chen2025medium}. While prior studies are largely centered on 2D benchmarks, pruning in 3D often faces class imbalance, where the effect of pruning depends on the evaluation prior. Several recent methods address imbalance-aware pruning by enforcing class-wise stratified sampling with class-quota estimated by class-wise difficulty score (through a hold-out set performance or class-wise aggregation on scalar importance values like EL2N), but typically optimizing a fixed target criterion such as mean per-class accuracy or worst-class accuracy~\cite{vysogorets2024drop,zhang2025non,tsai2025class}. In contrast, we explicitly study pruning under two priors, the natural distribution and the mean-class distribution, and explicitly characterize their confidant and overlap in the 3D setting.

\section{Theoretical Analysis} \label{sec:theory}
To characterize the conflict between OA and mAcc, we formalize dataset pruning as a quadrature approximation of the population risk. This formulation enables an explicit decomposition of learning error into representation-driven and prior-driven components, thereby exposing the mathematical origin of this conflict.

\textbf{Pruning as quadrature approximation.}
Let $p$ denote the target distribution over examples $x$ (for brevity, $x$ may include an input--label pair), and define the population risk
$\mathcal{L}(\theta)=\mathbb{E}_{x\sim p}[\ell_{\theta}(x)]$.
Given a subset $S\subset\{1,\dots,n\}$ of size $m$ and weights $w\in\Delta^{m-1}$, define the discrete measure
$q_{S,w}=\sum_{i\in S}w_{i}\delta_{x_i}$ with $\delta$ denoting the Dirac function. Therefore, we can think of pruning as using a quadrature rule to approximate such integration, under the following form:
\begin{equation}
\hat{\mathcal{L}}_{S,w}(\theta)
=\mathbb{E}_{x\sim q_{S,w}}[\ell_{\theta}(x)]
=\sum_{i\in S}w_{i}\ell_{\theta}(x_{i}).
\end{equation}
The following lemma links the quality of this approximation directly to the generalization gap of the trained model.
\begin{lemma}[Generalization gap via discrepancy]\label{lem:gen_gap}
Let $\mathcal{G}=\{\ell_{\theta}(\cdot):\theta\in \Theta\}$ and define the discrepancy
$D_{\mathcal{G}}(p,q)=\sup_{g\in\mathcal{G}}|\mathbb{E}_{p}[g]-\mathbb{E}_{q}[g]|$.
If $\theta^{*}\in\arg\min_{\theta}\mathcal{L}(\theta)$ and $\hat{\theta}\in\arg\min_{\theta}\hat{\mathcal{L}}_{S,w}(\theta)$, then
\begin{equation}
  \mathcal{L}(\hat{\theta})-\mathcal{L}(\theta^{*})
  \le 2\,D_{\mathcal{G}}(p, q_{S,w}),
  \label{eq:upper_bound}
\end{equation}
where $D_{\mathcal{G}}$ is an instance of an integral probability metric (IPM) \citep{muller1997integral}.
\end{lemma}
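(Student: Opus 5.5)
The argument is the standard three-term decomposition from ERM analysis, with the empirical risk replaced by the quadrature risk $\hat{\mathcal{L}}_{S,w}$. We write the excess risk as a telescoping sum
\begin{equation*}
\mathcal{L}(\hat\theta)-\mathcal{L}(\theta^*)
=\bigl[\mathcal{L}(\hat\theta)-\hat{\mathcal{L}}_{S,w}(\hat\theta)\bigr]
+\bigl[\hat{\mathcal{L}}_{S,w}(\hat\theta)-\hat{\mathcal{L}}_{S,w}(\theta^*)\bigr]
+\bigl[\hat{\mathcal{L}}_{S,w}(\theta^*)-\mathcal{L}(\theta^*)\bigr]
\end{equation*}
and bound each bracket separately.

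The middle bracket is nonpositive. Since $\hat\theta$ minimizes $\hat{\mathcal{L}}_{S,w}$ over $\Theta$, we have $\hat{\mathcal{L}}_{S,w}(\hat\theta)\le\hat{\mathcal{L}}_{S,w}(\theta)$ for every $\theta\in\Theta$, in particular for $\theta=\theta^*$.

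For the two outer brackets, note that $\mathcal{L}(\theta)=\mathbb{E}_p[\ell_\theta]$ and $\hat{\mathcal{L}}_{S,w}(\theta)=\mathbb{E}_{q_{S,w}}[\ell_\theta]$ by definition of $q_{S,w}$. For any fixed $\theta\in\Theta$ the function $\ell_\theta$ lies in $\mathcal{G}$, so
\begin{equation*}
|\mathcal{L}(\theta)-\hat{\mathcal{L}}_{S,w}(\theta)|\le\sup_{g\in\mathcal{G}}|\mathbb{E}_p[g]-\mathbb{E}_{q_{S,w}}[g]|=D_{\mathcal{G}}(p,q_{S,w}).
\end{equation*}
Applying this once with $\theta=\hat\theta$ and once with $\theta=\theta^*$ bounds each outer bracket by $D_{\mathcal{G}}(p,q_{S,w})$. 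Summing the three brackets gives the factor $2$ in \eqref{eq:upper_bound}.

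There is essentially no obstacle here. The only care needed is in the implicit assumptions. Both minimizers must exist, as the statement posits. The expectations must be finite; otherwise $D_{\mathcal{G}}$ may be $+\infty$ and the bound is vacuous but still true. It also matters that $\hat\theta$ is data-dependent: the argument stays valid because the discrepancy is a supremum uniform over $\mathcal{G}$, so evaluating at the random $\hat\theta$ costs nothing. Finally, we remark that $D_{\mathcal{G}}$ is an IPM simply because it is a supremum of mean differences over a function class.
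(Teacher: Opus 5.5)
Your proposal is correct and matches the paper's proof. The paper chains $\mathcal{L}(\hat\theta)\le\hat{\mathcal{L}}_{S,w}(\hat\theta)+D\le\hat{\mathcal{L}}_{S,w}(\theta^*)+D\le\mathcal{L}(\theta^*)+2D$, which is your three-bracket telescoping: the middle bracket is dropped by optimality of $\hat\theta$, and each outer bracket is bounded by the uniform discrepancy $D=D_{\mathcal{G}}(p,q_{S,w})$.
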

Given labels are imbalanced, we further decompose the bound class-wise.
Assume the target distribution admits a label mixture
$p=\sum_{y}\pi^{\mathrm{tar}}_{y}\,p_{y}$,
where $p_{y}$ is the class-conditional distribution and $\pi^{\mathrm{tar}}$ is the \emph{target} class prior (e.g., the evaluation prior $\pi_y^\mathrm{uni}$ or $\pi_y^{\mathrm{emp}}$).
Partition $S=\cup_{y}S_{y}$ with $|S_{y}|=m_{y}$ and write
\begin{equation}
\begin{aligned}
  & q_{S,w}=\sum_{y}\rho_{y}q_{y},\qquad
  \rho_{y}:=\sum_{i\in S_y}w_{i},\qquad \\
  & q_{y}:=\rho_{y}^{-1}\sum_{i\in S_y}w_{i}\delta_{x_i}\ \ (\rho_y>0).
\end{aligned}
\end{equation}
Here $\rho$ is the class distribution \emph{induced} by the weighted pruned objective.
Under mild regularity conditions (e.g., bounded loss $\|\ell_{\theta}\|_{\infty}\le B$), for any fixed $\theta$,
\begin{equation}
\begin{aligned}
  \big|\mathbb{E}_{p}[\ell_{\theta}]-\mathbb{E}_{q_{S,w}}[\ell_{\theta}]\big|
  \le\ &\underbrace{\sum_y \pi^{\mathrm{tar}}_y
  \big|\mathbb{E}_{p_y}[\ell_\theta]-\mathbb{E}_{q_y}[\ell_\theta]\big|}_{\text{(A) representation error}} \\
  &+\underbrace{2B\,\|\pi^{\mathrm{tar}}-\rho\|_{\mathrm{TV}}}_{\text{(B) prior mismatch bias}}.
\end{aligned}
\label{eq:decomp}
\end{equation}
Term (A) measures the approximation quality within each class, while term (B) depends only on how far the induced prior $\rho$ is from the target prior $\pi^{\mathrm{tar}}$. We next analyze the two terms. For term (A), many selection rules yield a per-class approximation rate of the following form.

\begin{lemma}[Per-class approximation rate (informal)]\label{lem:class_rate}
With high probability (given $m_y$),
\begin{equation}
  \sup_{\theta\in\Theta}\big|\mathbb{E}_{p_y}[\ell_{\theta}] - \mathbb{E}_{q_y}[\ell_{\theta}]\big|
  \ \lesssim\ \frac{c_{y}}{m_{y}^{\gamma}} + \mathrm{BiasTerm}_{y},
  \label{eq:alloc}
\end{equation}
where $c_{y}$ captures the data/model complexity of class $y$ (we refer to it as \emph{class complexity}).
The exponent $\gamma$ depends on the selection strategy; uniform random sampling typically gives $\gamma=\tfrac{1}{2}$.
\end{lemma}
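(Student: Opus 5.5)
We first make the informal statement precise and then prove it in the canonical case of uniform random sampling. There $\gamma=\tfrac12$ and $\mathrm{BiasTerm}_y=0$. Other selection rules are handled by changing only the concentration step.

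\textbf{Setup.} Condition on the class sizes $m_y$. Assume the $m_y$ retained points of class $y$ are drawn i.i.d.\ from $p_y$, and that $q_y$ is their empirical measure with uniform weights $w_i=\rho_y/m_y$. Write $\mathcal{G}_y=\{\ell_\theta|_{\text{class }y}:\theta\in\Theta\}$, with every member bounded by $B$ as in the decomposition \eqref{eq:decomp}. The left-hand side of \eqref{eq:alloc} is then exactly the uniform deviation
\[
\Phi_y:=\sup_{g\in\mathcal{G}_y}\big|\mathbb{E}_{p_y}[g]-\mathbb{E}_{q_y}[g]\big| ,
\]
which is the same IPM as in Lemma~\ref{lem:gen_gap}, restricted to class $y$.

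\textbf{Concentration and symmetrization.}
\begin{enumerate}
\item Changing one sample moves $\Phi_y$ by at most $2B/m_y$. McDiarmid's inequality therefore gives, with probability at least $1-\delta$,
\[
\Phi_y\le\mathbb{E}\Phi_y+B\sqrt{2\log(1/\delta)/m_y}.
\]
\item Symmetrization gives $\mathbb{E}\Phi_y\le 2\,\mathfrak{R}_{m_y}(\mathcal{G}_y)$, the Rademacher complexity of the loss class under $p_y$.
\item For standard parametric or Lipschitz models (bounded-norm networks composed with a Lipschitz loss), $\mathfrak{R}_{m}(\mathcal{G}_y)\le \tilde c_y/\sqrt{m}$. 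Here $\tilde c_y$ depends on the Lipschitz constant, the weight-norm bounds and the radius of the class-$y$ support.
\end{enumerate}
Combining the three steps yields
\[
\Phi_y\le c_y\, m_y^{-1/2}, \qquad c_y:=2\tilde c_y+B\sqrt{2\log(1/\delta)},
\]
so $c_y$ genuinely encodes class complexity. A union bound over the $C$ classes, replacing $\delta$ by $\delta/C$, makes the bound hold simultaneously for all $y$, which is what is needed to plug it into term (A).

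\textbf{Other selection rules and the bias term.} Deterministic or structured selectors such as herding, kernel thinning or $k$-center cannot be treated as i.i.d.\ sampling, so steps 1--2 are replaced.
\begin{itemize}
\item \emph{Kernel-based selectors.} If each $g\in\mathcal{G}_y$ lies in an RKHS ball of radius $R$, then $\Phi_y\le R\cdot\mathrm{MMD}(p_y,q_y)$. Known herding and kernel-thinning rates bound this MMD by $m_y^{-\gamma}$ with $\gamma$ up to $1$.
\item \emph{Covering-based selectors.} For $k$-center in an embedding space of intrinsic dimension $d$, Lipschitzness of $g$ gives $\Phi_y\lesssim L\,\varepsilon(m_y)$, where $\varepsilon(m)\sim m^{-1/d}$ is the covering radius, so $\gamma=1/d$.
\end{itemize}
$\mathrm{BiasTerm}_y$ absorbs what these arguments cannot control. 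It has two sources: the mismatch between the empirical pool of class $y$ and $p_y$ (itself $O(n_y^{-1/2})$ by the i.i.d.\ argument above), and systematic skew from score-based rules. For score-based rules that keep, say, only hard examples, $q_y$ converges to a reweighted distribution $\tilde p_y\neq p_y$. The triangle inequality then gives
\[
\Phi_y\le D_{\mathcal{G}}(p_y,\tilde p_y)+D_{\mathcal{G}}(\tilde p_y,q_y),
\]
and the first term is exactly the bias.

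\textbf{Main obstacle.} The delicate step is making ``many selection rules'' rigorous, since each deterministic selector needs its own discrepancy rate and function-class regularity (RKHS membership or Lipschitzness of $\ell_\theta$ in the embedding). Because the lemma is informal, I would prove it rigorously only for random sampling. For the other rules I would give the conditional statements above, each assuming the corresponding known MMD or covering rate.
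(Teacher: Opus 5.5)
Your i.i.d.-sampling argument is correct and is essentially the paper's proof. The paper applies Mohri et al.'s Rademacher bound (Theorem 3.3, with the empirical complexity $\widehat{\mathfrak{R}}_{S_y}$) to $\ell_\theta/B$ and to $1-\ell_\theta/B$ at level $\delta/2$ each, then assumes $\widehat{\mathfrak{R}}_{S_y}\le c_y/(2\sqrt{m_y})$. You instead run McDiarmid once on the two-sided supremum and symmetrize with the expected complexity, which only changes constants. The paper's proof stops at the i.i.d. case and never identifies $\mathrm{BiasTerm}_y$, so your union bound over classes and your treatment of other selectors go beyond it.

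One caveat on those extensions: the $k$-center bound $\Phi_y\lesssim L\,\varepsilon(m_y)$ holds only if each center is weighted by the pool mass of its Voronoi cell, which gives $W_1\le\varepsilon$. With the uniform within-class weights that you and the paper use, $q_y$ drifts toward a roughly uniform measure on the support rather than toward $p_y$. That discrepancy does not vanish as $m_y$ grows, so it has to be booked into $\mathrm{BiasTerm}_y$.
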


\begin{theorem}[Optimal allocation for term (A)]\label{thm:opt_alloc}
Assume the rate exponent $\gamma$ is the same across classes and ignore rounding effects.
The allocation that minimizes
$\sum_y \pi^{\mathrm{tar}}_y \big|\mathbb{E}_{p_y}[\ell_\theta]-\mathbb{E}_{q_y}[\ell_\theta]\big|$
under a fixed budget $\sum_y m_y=m$ satisfies
\begin{equation}
  m_y \propto (c_{y}\pi_{y}^{\mathrm{tar}})^{k},
  \qquad k=\frac{1}{1+\gamma}.
\end{equation} \label{equ:opt_alloc}
\end{theorem}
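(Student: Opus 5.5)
The plan is to replace term (A) by the surrogate bound from Lemma~\ref{lem:class_rate} and minimize that surrogate. Concretely, for each class the supremum over $\theta$ of the per-class error is controlled by $c_y m_y^{-\gamma} + \mathrm{BiasTerm}_y$. Since $\mathrm{BiasTerm}_y$ does not depend on $m_y$ (or is treated as allocation-independent in the informal statement), it drops out of the optimization. We therefore study the continuous relaxation
\[
\min_{m_y>0}\ F(m):=\sum_y \pi^{\mathrm{tar}}_y c_y\, m_y^{-\gamma}\quad\text{s.t.}\quad \sum_y m_y = m ,
\]
where "ignore rounding" justifies treating $m_y$ as positive reals. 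The theorem is then read as a statement about the minimizer of this upper bound. This is the natural interpretation, since the exact term (A) is not available in closed form.

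\textbf{Existence, uniqueness and the Lagrange condition.} Write $a_y:=\pi^{\mathrm{tar}}_y c_y>0$; classes with $a_y=0$ trivially receive $m_y=0$ and can be discarded. Each map $m_y\mapsto a_y m_y^{-\gamma}$ with $\gamma>0$ is strictly convex on $(0,\infty)$ and blows up as $m_y\to 0^+$. Hence $F$ is strictly convex on the relative interior of the simplex $\{\sum_y m_y=m\}$, a minimizer exists, it lies in the interior, and it is unique. At that point the Lagrange (KKT) conditions
\[
-\gamma\, a_y\, m_y^{-\gamma-1} + \lambda = 0 \qquad\text{for all } y
\]
are necessary and sufficient.

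\textbf{Solving for the allocation.} The stationarity condition gives $m_y^{1+\gamma} = \gamma a_y/\lambda$, that is,
\[
m_y = (\gamma/\lambda)^{1/(1+\gamma)}\,(c_y\pi^{\mathrm{tar}}_y)^{1/(1+\gamma)} .
\]
Normalizing by the budget fixes $\lambda$ and yields
\[
m_y = m\,\frac{(c_y\pi^{\mathrm{tar}}_y)^{k}}{\sum_{y'}(c_{y'}\pi^{\mathrm{tar}}_{y'})^{k}}, \qquad k=\frac{1}{1+\gamma},
\]
which is the claimed proportionality. An alternative route I could give as a check is H\"older's inequality, which shows directly that this allocation attains the lower bound $F(m)\ge m^{-\gamma}\big(\sum_y a_y^{k}\big)^{1+\gamma}$.

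\textbf{Main obstacle.} The calculus is routine; the delicate step is the modeling step that justifies optimizing the bound rather than term (A) itself. Two issues arise. First, Lemma~\ref{lem:class_rate} is informal, holds only with high probability, and only up to constants. I would state explicitly that the constants are absorbed into $c_y$ and that a union bound over classes makes all per-class bounds hold simultaneously. Second, $\mathrm{BiasTerm}_y$ must be independent of $m_y$. I would also remark that the result concerns the $\theta$-uniform bound, so a fixed $\theta$ inside the objective is dominated by the supremum.
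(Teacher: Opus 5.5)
Your proposal is correct and follows the paper's proof: the paper likewise minimizes the continuous relaxation $\sum_y \pi_y^{\mathrm{tar}} c_y m_y^{-\gamma}$ subject to $\sum_y m_y = m$ with a Lagrangian, obtains $m_y^{1+\gamma}=\gamma a_y/\lambda$, and normalizes by the budget. Your strict-convexity argument (interior minimizer exists, is unique, and the stationarity condition is sufficient) justifies the ``unique optimizer'' claim, which the paper states without proof.
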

\vspace{-0.1in}
This shows that the representation-optimal subset is generally not balanced.
Classes with larger $c_y$ (harder classes) and/or larger $\pi_y^{\mathrm{tar}}$ receive more budget.
This dependence on $\pi^{\mathrm{tar}}$ is important: different choices of the target prior emphasize different allocations.

For term (B), class-wise reweighing in post-pruning training can resolve. One choice is $w_{i}=\pi^\mathrm{tar}_{y}/{m_{y}} \quad \text{for } i\in S_{y}$,
which ensures $\rho_{y}=\pi^{\mathrm{tar}}_{y}$ for all $y$ and eliminates term (B) in~\eqref{eq:decomp}.
The remaining bound depends only on representation:
\begin{equation}
\mathrm{Error}(\pi^{\mathrm{tar}})
\ \lesssim\ \sum_{y} \pi_y^{\mathrm{tar}}\left(\frac{c_{y}}{m_{y}^{\gamma}}+\mathrm{BiasTerm}_{y}\right).
\label{eq:optimal}
\end{equation}
The optimal allocation of $\{m_y\}$ follows Equ.~(\ref{equ:opt_alloc}).

\noindent
\textbf{Dilemma.}
The analysis above applies to any target prior $\pi^{\mathrm{tar}}$.
In this work, we focus on a uniform prior $\pi^{\mathrm{uni}}$ and an empirical prior $\pi^{\mathrm{emp}}$. Our initial analysis shows that a subset optimized for $\pi^{\mathrm{uni}}$ will, in general, differ from one optimized for $\pi^{\mathrm{emp}}$. In short, the optimal condition derived from (A) and (B) reflects an inherent conflict between the two priors across selection and the post-pruning training.

\section{Method}

\paragraph{Roadmap.} To tackle the dilemma in Sec~\ref{sec:theory}, we prioritize establishing base principles that are robust and beneficial across priors, then applies steering wrapper, as shown in Fig~\ref{fig:main_alg}. Guided by our theoretical analysis, the method unfolds in three parts: first, we target the prior mismatch bias (Term B) by decoupling structural likelihood from class priors during post-training. Second, we minimize the representation error (Term A) by identifying geometric pruning signals robust to imbalance with minimum floor. Finally, we introduce a steering wrapper that explicitly modulates the selection strategy to satisfy different user preferences.

\begin{figure*}[t]
\centering
\includegraphics[width=0.98\linewidth]{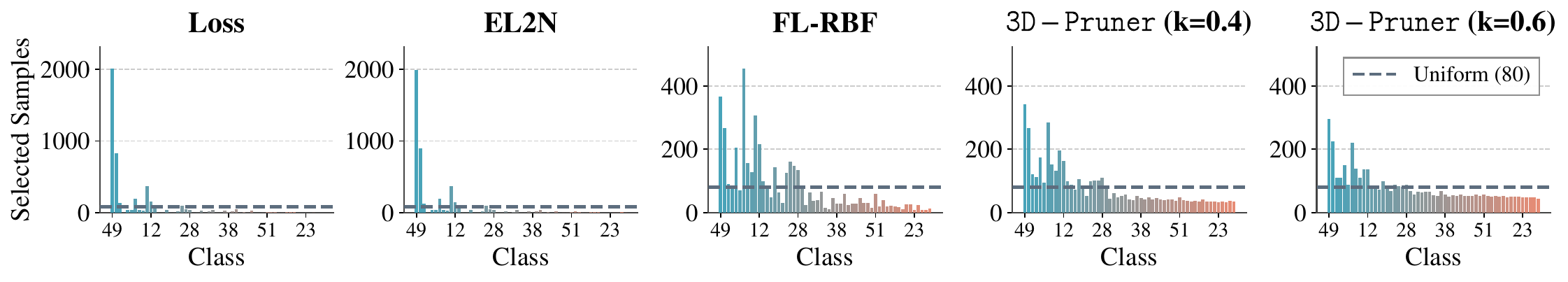}
\vspace{-0.3cm}
\caption{Selection composition across algorithms. Incomparable scalar scores (loss, EL2N) bias selection toward many-shot classes. Geometric embedding selection is more robust but still partially affected by class distribution, under-selecting certain classes. SGS mitigates this, also enabling a flexible balancing.}
\label{fig:composition}
\vspace{-1.5em}
\end{figure*}

\subsection{Resolving Term~B: Robust Post-pruning Distillation}

\textbf{Decomposing the Dilemma}\label{subsec:decomposition}
Hard-label supervision conflates two qualitatively different factors, i.e., the class-conditional structure and the class prevalence. Following Bayes' rule:
\begin{equation}
\underbrace{\log p(y\mid x)}_{\text{Posterior}}
=
\underbrace{\log p(x\mid y)}_{\text{Structural Likelihood}}
+
\underbrace{\log p(y)}_{\text{Class Prior}}
+
C(x),
\label{eq:posterior_decomp}
\end{equation}
where $C(x)$ is independent of $y$.
Reweighting simply scales the posterior, implicitly forcing the model to fit a distorted prior. Crucially, in this paradigm, the pruning weights $w$ \emph{define} the target distribution; thus, any slight misalignment in $w$ shifts the global optimum.
However, we observe that $\log p(x\mid y)$ describes the semantic geometry of the data manifold, which is \emph{shared} regardless of the evaluation metric (OA or mAcc). The trade-off is fundamentally a conflict of priors, not structures.

\textbf{Decoupling via Knowledge Distillation.}
To explicitly learn the shared structure separate from the prior offset, we employ Knowledge Distillation (KD).
While soft labels are standard in dataset distillation for condensing information density~\cite{yin2023squeeze,cui2023scaling}, we exploit a distinct, typically overlooked property: KD's \emph{structural invariance} to reweighting.
Consider the KD objective on a pruned subset $S$ with weights $w$:
\begin{equation}
\hat{\mathcal{L}}_{\mathrm{KD}}^{S,w}(\theta)
=
\sum_{i\in S} w_i \,
\mathrm{CE}\ \big(T(\cdot\mid x_i),\, f_\theta(\cdot\mid x_i)\big).
\label{eq:kd_emp_main}
\end{equation}
Crucially, because the teacher's posterior $T(\cdot|x_i)$ is fixed instance-wise, the weights $w$ only modulate the \emph{optimization pace} of each sample, but never alter the \emph{optimal target} itself.
This stands in sharp contrast to hard-label supervision, where $w$ constitutes the target prior. KD renders the learning objective theoretically robust to the reweighting.

\begin{proposition}[Weight-robustness of KD]\label{prop:kd_weight_robust}
Assume the student is expressive enough to interpolate the teacher on the support $S$, namely
$f_\theta(\cdot\mid x_i)=T(\cdot\mid x_i)$ for all $i\in S$.
Then for any strictly positive weights $w$, any interpolating solution is a global minimizer of
$\hat{\mathcal{L}}_{\mathrm{KD}}^{S,w}$.
\end{proposition}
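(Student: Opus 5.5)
The plan is to prove the proposition pointwise, via Gibbs' inequality, and then sum with nonnegative weights. For each fixed $i\in S$, I view the per-sample term $\mathrm{CE}\big(T(\cdot\mid x_i), f_\theta(\cdot\mid x_i)\big) = -\sum_{y} T(y\mid x_i)\log f_\theta(y\mid x_i)$ as a function of the student's output distribution $f_\theta(\cdot\mid x_i)$ ranging over the simplex $\Delta^{C-1}$.

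\textbf{Step 1 (per-sample lower bound).} I will use the identity
\[
\mathrm{CE}(T,f) = H(T) + \mathrm{KL}(T\,\|\,f).
\]
Gibbs' inequality gives $\mathrm{KL}(T\|f)\ge 0$, so $\mathrm{CE}(T,f)\ge H(T)$, with equality when $f=T$. The entropy $H(T(\cdot\mid x_i))$ depends only on the fixed teacher and not on $\theta$. Hence $H(T(\cdot\mid x_i))$ is a $\theta$-independent lower bound on the $i$-th term, and it is attained exactly when $f_\theta(\cdot\mid x_i)=T(\cdot\mid x_i)$. Where $T$ has zeros, I will adopt the convention $0\log 0=0$; at such points KL may be infinite for some $f$, but that only strengthens the bound.

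\textbf{Step 2 (aggregate).} For any $\theta$ and any strictly positive weights $w$ (nonnegativity would already suffice here), summing the Step 1 bounds gives
\[
\hat{\mathcal{L}}_{\mathrm{KD}}^{S,w}(\theta) = \sum_{i\in S} w_i\,\mathrm{CE}_i(\theta) \ \ge\ \sum_{i\in S} w_i H\big(T(\cdot\mid x_i)\big) =: L^\star(w).
\]
If $\theta^\dagger$ interpolates the teacher on $S$, then every term attains equality, so $\hat{\mathcal{L}}_{\mathrm{KD}}^{S,w}(\theta^\dagger)=L^\star(w)$. Therefore $\theta^\dagger$ is a global minimizer for every such $w$. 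The set of interpolating solutions does not depend on $w$, which is the claimed robustness.

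\textbf{Step 3 (optional converse).} For the remark that weights change only the pace and not the target, I would also note the converse: with $w_i>0$, any global minimizer must interpolate. If some term had $\mathrm{KL}>0$, its contribution $w_i\,\mathrm{KL}>0$ would push the objective strictly above $L^\star(w)$, which the interpolant attains by the assumption. So the minimizer set is exactly the interpolating set, independent of $w$. This is where strict positivity is used.

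The only delicate point is the handling of zero-probability entries and the $\mathrm{CE}=H+\mathrm{KL}$ decomposition when the supports differ. I will resolve this with the standard extended-real conventions. Otherwise the argument is routine.
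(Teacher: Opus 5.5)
Your proposal is correct and follows the paper's proof essentially verbatim: you decompose each term as $H(T_i)+\mathrm{KL}(T_i\|f_i)$ and apply Gibbs' inequality to get the $\theta$-independent lower bound $\sum_{i\in S} w_i H(T_i)$, which every interpolant attains; the paper also records your converse, that with $w_i>0$ equality holds iff $f_\theta(\cdot\mid x_i)=T(\cdot\mid x_i)$ for all $i\in S$. One small correction that does not affect the argument: the KL term is infinite where $f_\theta(y\mid x_i)=0$ but $T(y\mid x_i)>0$, not at zeros of $T$, which contribute nothing to either the cross-entropy or the KL.
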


\textit{Insight.}
KD effectively shifts the bottleneck from tuning the weights $w$ to calibrating the teacher $T$. By fixing the reference for the structural likelihood, the student preserves the correct manifold geometry regardless of the re-balancing.

\begin{table}[ht]
\centering
\vspace{-0.5em}
\scriptsize
\setlength{\tabcolsep}{3.5pt}
\renewcommand{\arraystretch}{1.05}
\caption{\textbf{KD improves accuracy and reduces sensitivity to rebalancing.}
  Rebalancing policies include instance-balanced sampling (IB), class-balanced sampling (CB),
and square-root sampling (Sqrt)~\cite{kang2019decoupling}, plus class-balanced loss (CB-Loss)~\cite{cui2019class}. Results are obtained on ShapeNet55 with PointNet++, with imbalanced set selected by FL-RBF~\cite{wei2015submodularity}.}
\label{tab:kd_rebalancing_robust}
\vspace{-2mm}
\begin{tabular}{@{}lcccc@{}}
  \toprule
  & \multicolumn{2}{c}{Hard} & \multicolumn{2}{c}{KD} \\
  \cmidrule(lr){2-3}\cmidrule(lr){4-5}
  Rebalancing & OA & mAcc & OA & mAcc \\
  \midrule
  CB-Samp.\ (CB)     & 79.02 & 61.20 & 82.09 & 64.56 \\
  CB-Loss            & 77.45 & 62.16 & 81.94 & 63.57 \\
  IB-Samp.\ (IB)     & 79.92 & 58.41 & 82.04 & 64.02 \\
  Sqrt-Samp.\ (Sqrt) & 80.00 & 60.10 & 81.21 & 64.81 \\
  \midrule
  \textbf{Mean$\pm$Std} &
  \textbf{79.10$\pm$1.18} & \textbf{60.47$\pm$1.61} &
  \textbf{81.82$\pm$0.41} & \textbf{64.24$\pm$0.56} \\
  \bottomrule
\end{tabular}
\vspace{-0.6cm}
\end{table}

\paragraph{Optimizing the Structure.}\label{subsec:teacher_calib}
In 3D recognition, standard protocols typically minimize average error without rebalancing. Consequently, a teacher trained under this regime inevitably encodes the long-tailed prior, producing biased targets that conflate high-quality structural signals with class prevalence. To extract a refined \emph{structural likelihood}, we employ a twofold refinement targeting both boundary alignment and manifold geometry. \\
\textit{1) Calibrating the Structural Source (Boundary Alignment).}
First, we mitigate the prior bias in the teacher's logits to correct the sample-to-boundary relations.
Recognizing that the learned embedding $\phi$ already captures robust semantic geometry~\citep{kang2019decoupling}, we freeze $\phi$ and re-train only the classifier head $(W,b)$ using a class-balanced objective on the full dataset:
\begin{equation}
\min_{W,b}
\mathbb{E}_{p_{\mathrm{train}}}
\Big[\alpha_y \, \mathrm{CE}\big(\delta_y, \sigma(W\phi(x)+b)\big)\Big].
\label{eq:teacher_calib_obj}
\end{equation}
This adjustment neutralizes the head's tendency to favor majority classes, aligning the logits closer to the intrinsic likelihood $p(x|y)$. \\
\textit{2) Preserving Geometry under Sparsity (Intrinsic Topology).}
While calibration refines the decision boundaries, pruning introduces a density problem: the manifold support collapses into sparse anchor points.
Recognizing that structural information comprises not only the \emph{extrinsic} distance to decision boundaries (via logits) but also the \emph{intrinsic} relational topology among samples, we explicitly transfer the latter using Relational Knowledge Distillation (RKD)~\citep{park2019relational}.
RKD enforces consistency in pairwise distances ($\psi_D$) and triplet angles ($\psi_A$) within a batch $\mathcal{B}$:
\begin{equation}
\mathcal{L}_{\mathrm{RKD}}
= \sum_{\mathcal{B}^2} \ell_{\delta}(\psi_D^T, \psi_D^S) + \sum_{\mathcal{B}^3} \ell_{\delta}(\psi_A^T, \psi_A^S).
\label{eq:rkd_loss}
\end{equation}
By anchoring the relative geometry, RKD enables the student to reconstruct the \textit{internal shape} of the class manifold even from sparse samples, effectively compensating for the information loss in Term~B.

\subsection{Resolving Term~A: Geometry-aware Selection} \label{sec:robust_selection_shared_geometry}

\textbf{1) Robust Signals: Leveraging 3D Inductive Bias.}
The first challenge in selection is identifying a pruning metric that remains comparable across highly imbalanced classes. We believe the importance of signal quality outweighs that of further allocation based on the signal. We identify \emph{embedding geometry} as the robust signal for 3D data, distinct from classifier-derived scalars (e.g., Loss, Entropy, EL2N).

In 3D recognition, models rely on deep inductive biases to learn local geometric primitives (e.g., corners, flats, curvatures). Crucially, these geometric units are \emph{shared} across both few-shot and many-shot classes, making the embedding space inherently more stable and comparable than the classifier decision boundaries.
In contrast, scalar signals are entangled with the training prior: as shown in Table~\ref{tab:signal_audit}, metrics like Loss and EL2N exhibit extreme dependence on class frequency, collapsing selection onto head classes (as shown in Fig~\ref{fig:composition}).
Furthermore, selecting based on embedding geometry aligns perfectly with our Term~B optimization, which explicitly preserves this structure via RKD.
\begin{table}[ht]
\vspace{-1.5em}
\centering
\caption{\textbf{Signal Audit.} We measure correlation ($\rho$) and dependence ($R^2$) of class-aggregated signal magnitude on class size, and head/tail score overlap. For embeddings, magnitude is distance to the class center. \textbf{Embedding signals show minimal class-frequency dependence and high cross-class overlap}, whereas scalar signals (Loss, EL2N) are dominated by the training prior, causing extreme selection imbalance (up to 40$\times$).}
\label{tab:signal_audit}
\vspace{-0.4em}
\resizebox{1.0\linewidth}{!}{
\begin{tabular}{lcccc}
\toprule
\textbf{Signal} & \textbf{$\rho$} (Corr. w/ Size) $\downarrow$ & \textbf{$R^2$} (Dep. on Size) $\downarrow$ & \textbf{Overlap} $\uparrow$ & \textbf{Imbalance Ratio} $\downarrow$ \\
\midrule
\textbf{Embedding} & \textbf{-0.306} & \textbf{0.148} & \textbf{96\%} & \textbf{1.88x} \\
Loss      & 0.635  & 0.145 & 26\% & 40.94x \\
EL2N      & 0.594  & 0.164 & 30\% & 33.01x \\
Entropy   & 0.731  & 0.481 & 26\% & 7.84x \\
\bottomrule
\end{tabular}%
}
\end{table}

\textbf{2) A Prior-Agnostic High-Return Regime.}
Even with a robust signal, pure global selection naturally drifts towards majority classes due to their density.
From Theorem~\ref{thm:opt_alloc}, we know a single fixed subset cannot be simultaneously optimal for divergent target priors (e.g., OA vs. mAcc).
However, the error bound reveals a \emph{shared high-return regime} that exists before any target-specific trade-off kicks in.

Let $E_y(m_y) \propto m_y^{-\gamma}$ denote the power-law error decay. The marginal gain from the first few samples is extremely steep. Specifically, allocating a minimal floor of $b$ samples captures a constant fraction of the reducible error:
\begin{equation}
\frac{E_y(1)-E_y(b)}{E_y(1)} = 1-\frac{1}{b^{\gamma}}.
\end{equation}
Crucially, this relative gain across all classes is \emph{independent} of any reweighting induced by the target prior $\pi^{\rm tar}$.
This motivates our strategy to reserve a safety budget to guarantee a minimum floor $m_y \ge b$ for every class.
By securing this ``safety floor," we harvest the common high-curvature region of the error surface, preventing systematic under-coverage of few-shot classes regardless of the final evaluation metric.

\textbf{Seeded Global Selection (SGS): Steering the Prior.}
Finally, we unify the proposed principles into a streamlined selection wrapper.
Having established that a robust subset requires both a \emph{safety floor} (to capture the shared high-return regime) and \emph{geometry-aware selection} (to leverage robust signals), SGS is designed as a direct instantiation of these two requirements.
It interpolates between a guaranteed floor and a data-driven distribution using a single steering parameter $K\in[0,1]$.

SGS operates in two modes, strictly following our theoretical analysis in Term~A.
\textit{Mode 1 (Seeding)} implements the \emph{safety floor} principle. It allocates a budget of $m_{\mathrm{seed}}=\lfloor K m\rfloor$ via stratified sampling, securing the base representation required for uniform-prior robustness (mAcc), also capturing the initial high-reward region.
\textit{Mode 2 (Global)} implements the \emph{geometry-aware} principle on the remaining budget ($m - m_{\mathrm{seed}}$). It runs global embedding selection to capture dense or complex regions naturally, which typically benefits OA.
The final subset is the union of both. By adjusting $K$, SGS steers the selection from a protective floor (high $K$) to a fully density-driven distribution (low $K$). This allows users to flexibly balance the preference between mAcc and OA without re-designing the selection metric or solving complex allocation problems.
Algorithm~\ref{alg:refined_selection} details the procedure.

\begin{algorithm2e}[t]
\DontPrintSemicolon
\SetKwInOut{Input}{Input}
\SetKwInOut{Output}{Output}
\caption{Refined Hybrid Selection}
\label{alg:refined_selection}

\Input{Dataset $\mathcal{D}$, Budget $B$, Safety ratio $k$, Classes $\mathcal{C}$,
Selector $\phi(\mathcal{X}, n, \mathcal{S}_{init})$ selecting $n$ items from $\mathcal{X}$ given seed $\mathcal{S}_{init}$}
\Output{Final Coreset $\mathcal{S}$}
\BlankLine

$b \leftarrow \lfloor (K \cdot B) / |\mathcal{C}| \rfloor$\;

$\mathcal{S}_{\rm strat} \leftarrow \bigcup_{c \in \mathcal{C}} \phi(\mathcal{D}_c, b, \emptyset)$,\;
$\mathcal{S}_{\rm glob} \leftarrow \phi(\mathcal{D}, B-K|C|, \emptyset)$\;

\BlankLine
$\mathcal{S}_{\rm union} \leftarrow \mathcal{S}_{\rm strat} \cup \mathcal{S}_{\rm glob}$\;

\If{$|\mathcal{S}_{union}| < B$}{
$\mathcal{S}_{\rm union} \leftarrow \phi(\mathcal{D}, B-|S_\mathrm{union}|, \mathcal{S}_{\rm union})$\;
}
\KwRet{$\mathcal{S}$}\;
\end{algorithm2e}

\begin{figure*}[t]
\centering
\begin{subfigure}[b]{0.495\linewidth}
\centering
\includegraphics[width=0.495\linewidth]{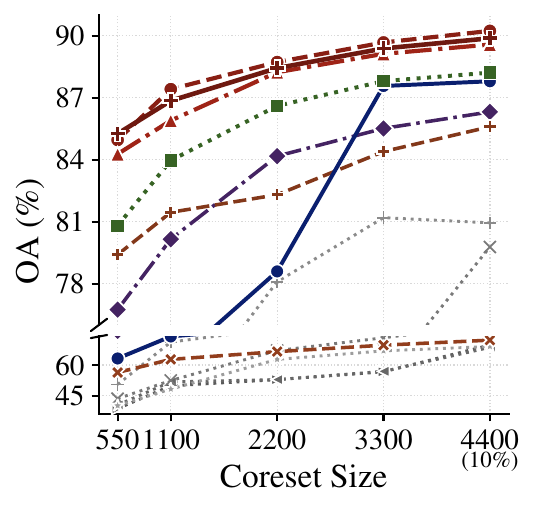}
\includegraphics[width=0.495\linewidth]{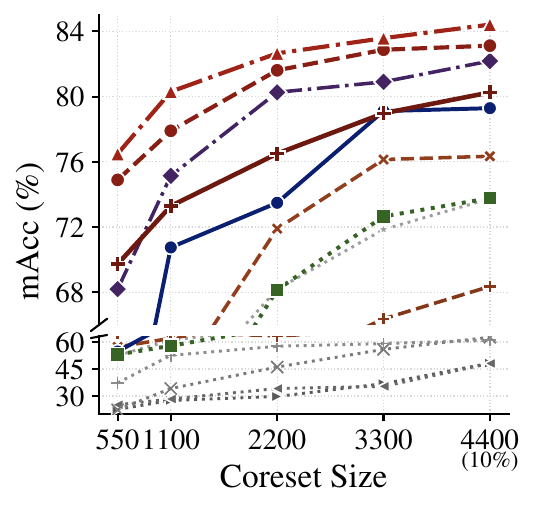}
\caption{PointNet++, ShapeNet}
\label{fig:shapenet}
\end{subfigure}
\begin{subfigure}[b]{0.495\linewidth}
\centering
\includegraphics[width=0.495\linewidth]{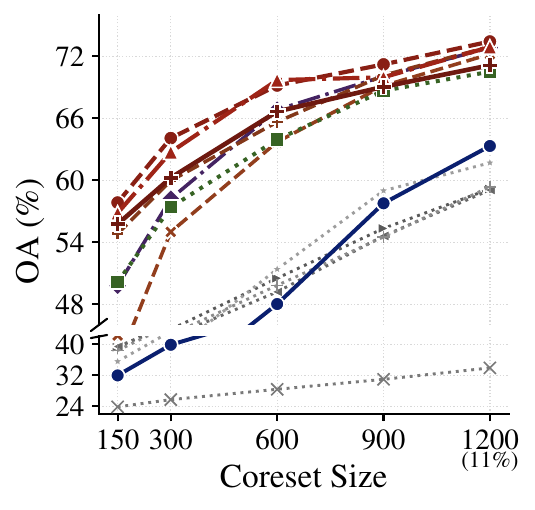}
\includegraphics[width=0.495\linewidth]{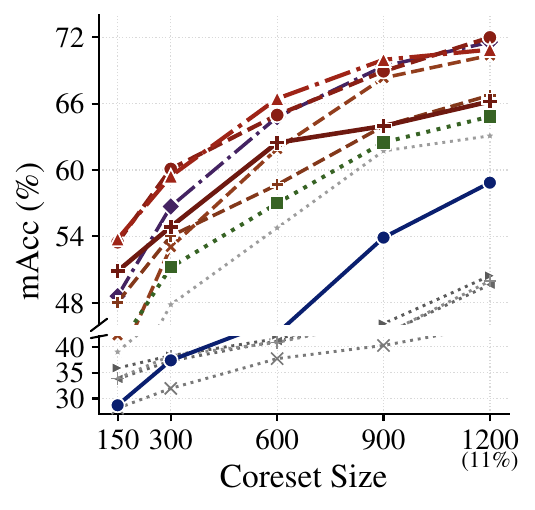}
\caption{PointNet++, ScanObjectNN}
\label{fig:scanobject}
\end{subfigure}

\begin{subfigure}[b]{0.495\linewidth}
\centering
\includegraphics[width=0.495\linewidth]{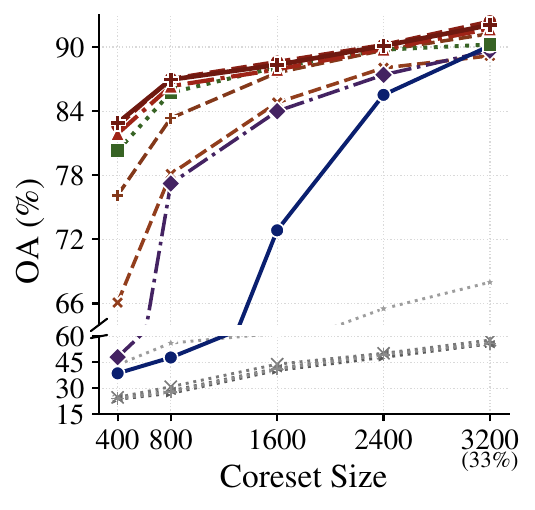}
\includegraphics[width=0.495\linewidth]{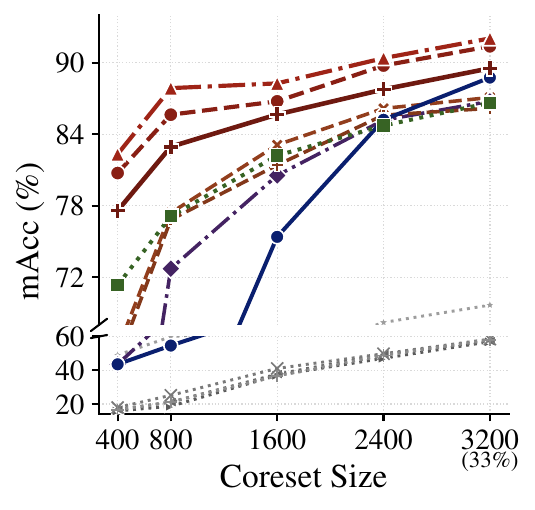}
\caption{PointMAE, ModelNet40}
\label{fig:pointmae}
\end{subfigure}
\begin{subfigure}[b]{0.495\linewidth}
\centering
\includegraphics[width=0.495\linewidth]{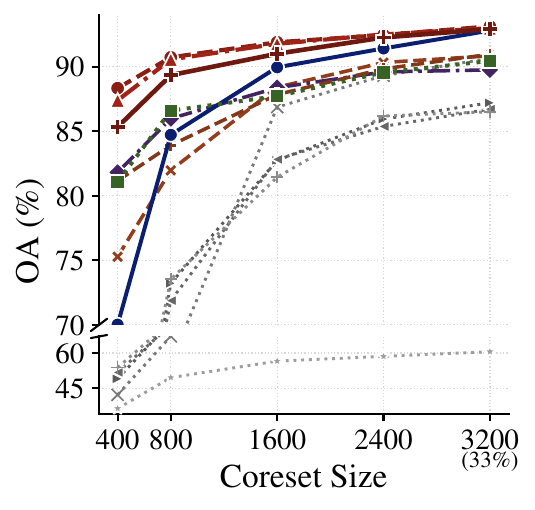}
\includegraphics[width=0.495\linewidth]{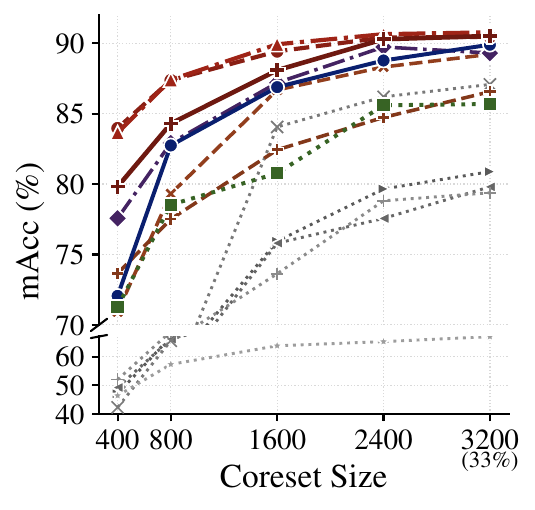}
\caption{PointNet++, ModelNet40}
\label{fig:modelnet}
\end{subfigure}

\vspace{-0.5em}
\begin{center}
\includegraphics[width=0.95\linewidth]{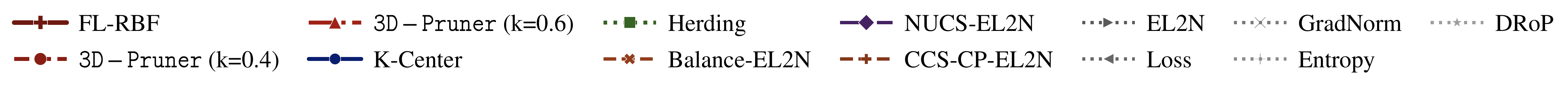}
\end{center}
\vspace{-1.0em}
\caption{Comparison of pruning methods across various datasets, models, and budgets on point cloud modality.}
\label{fig:main_experiment}
\vspace{-1.0em}
\end{figure*}

\section{Experiment}

\subsection{Experiment Setup}
\textbf{Datasets.} We evaluate on point clouds (ModelNet40~\citep{wu20153d_modelnet}, ScanObjectNN~\citep{uy2019revisiting_scanobject}, ShapeNet55~\cite{yi2016scalable_shapenet55}) and meshes (ModelNet40). \\
\textbf{Models.} For point clouds, we use CNN/MLP (PointNeXt~\citep{qian2022pointnext}, PointVector~\citep{deng2023pointvector}, PointNet++~\citep{qi2017pointnet++}) and transformer-based PointMAE~\citep{pang2023masked}. For meshes, we use MeshMAE~\cite{liang2022meshmae} and MeshNet~\cite{feng2019meshnet}. \\
\textbf{Methods.} We evaluate classical pruning metrics: loss, gradient norm, EL2N~\citep{paul2021deep_el2n}, entropy~\citep{wang2014new_entropy}, herding~\cite{welling2009herding}, K-center Greedy~\cite{sener2017active_kcenter}, FL-RBF~\citep{wei2015submodularity}, spling a variety of scalar difficulty based and embedding geometric based, as well as imbalance-aware methods: DRoP~\citep{vysogorets2024drop}, NUCS~\citep{zhang2025non}, and CCS-CB~\citep{tsai2025class}.\\
\textbf{Metrics \& Budget.} Following Sec.~\ref{background}, we report OA and mAcc per standard protocols. We focus on the high-compression regime to \emph{explicitly} distinguish dataset pruning from head-class downsampling~\cite{buda2018systematic}.
Given the extreme skew (e.g., in ShapeNet55: 84.9\% Many-shot vs. 1.6\% Few-shot), this setting isolates \textit{pure} selection efficacy under strict constraints
(extended results on relaxed budgets, additional results on different models, and detailed experimental settings provided in the appendix).

\begin{figure}[t]
\centering
\begin{subfigure}[b]{0.85\linewidth}
\centering
\includegraphics[width=\linewidth]{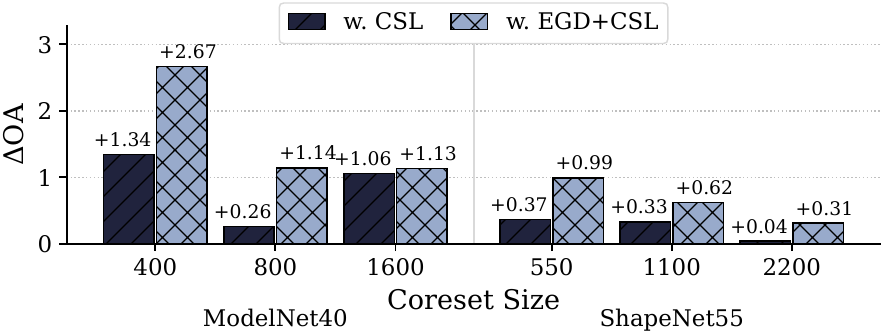}
\end{subfigure}

\begin{subfigure}[b]{0.85\linewidth}
\centering
\includegraphics[width=\linewidth]{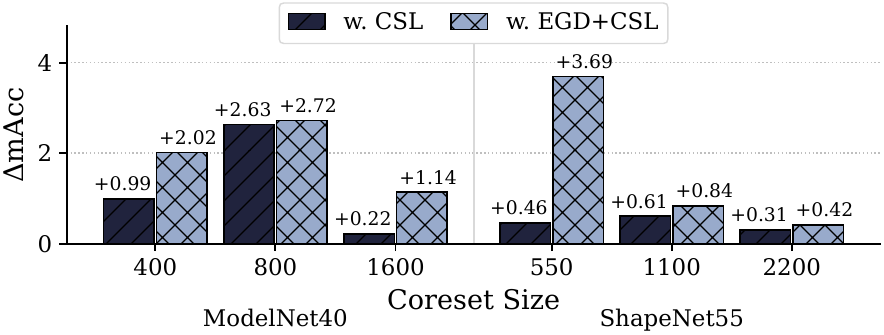}
\end{subfigure}
\caption{Impact of robust distillation methods: Calibrated Soft Label (CSL) and Embedding Geometry Distillation (EGD) on OA and mAcc. Results are obtained on PointNet++.} \label{fig:term B}
\vspace{-2.0em}
\end{figure}

\begin{figure}[t]
\centering
\centering
\includegraphics[width=0.495\linewidth]{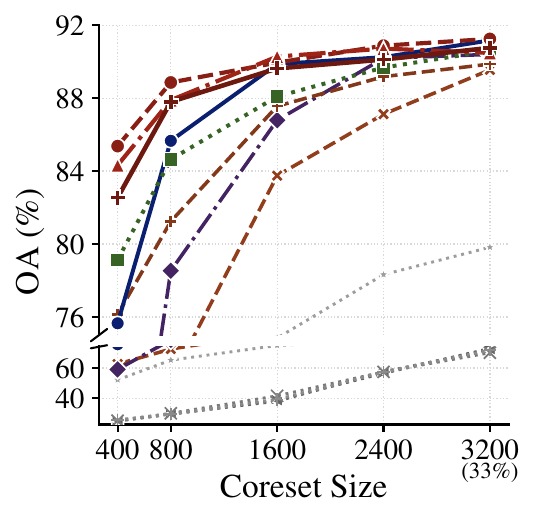}
\includegraphics[width=0.495\linewidth]{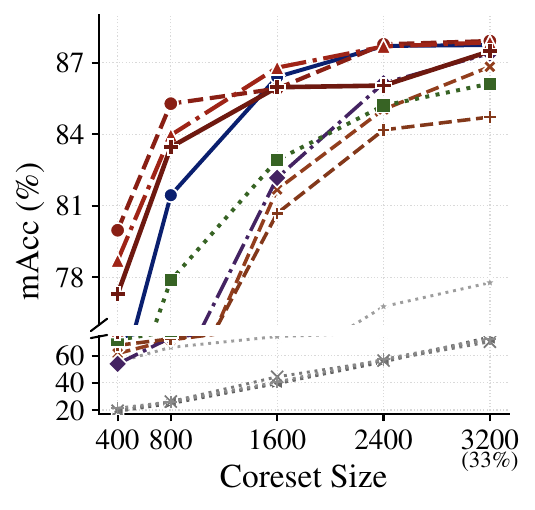}
\caption{Comparison of pruning methods on mesh modality, legend following Fig.~\ref{fig:main_experiment}}
\label{fig:meshnet}
\vspace{-2.0em}
\end{figure}

\begin{table*}[t]
\centering
\caption{\textbf{\textit{Fair comparison}} of selection strategies on \emph{ModelNet40}, \emph{ScanObjectNN}, and \emph{ShapeNet55} with Calibrated Soft Label and Embedding Geometry Distillation from \our{} applied to all baselines.}
\label{tab:fair comparison}
\vspace{-2mm}
\resizebox{\textwidth}{!}{
\begin{tabular}{ll cc cc cc cc cc cc cc cc cc}
\toprule
& & \multicolumn{6}{c}{\textbf{ModelNet40}} & \multicolumn{6}{c}{\textbf{ScanObjectNN}} & \multicolumn{6}{c}{\textbf{ShapeNet55}} \\
& & \multicolumn{2}{c}{$m=400$} & \multicolumn{2}{c}{$m=800$} & \multicolumn{2}{c}{$m=1600$}
& \multicolumn{2}{c}{$m=150$} & \multicolumn{2}{c}{$m=300$} & \multicolumn{2}{c}{$m=600$}
& \multicolumn{2}{c}{$m=550$} & \multicolumn{2}{c}{$m=1100$} & \multicolumn{2}{c}{$m=2200$} \\
\cmidrule(lr){3-4} \cmidrule(lr){5-6} \cmidrule(lr){7-8}
\cmidrule(lr){9-10} \cmidrule(lr){11-12} \cmidrule(lr){13-14}
\cmidrule(lr){15-16} \cmidrule(lr){17-18} \cmidrule(lr){19-20}
\textbf{Metric Type} & \textbf{Selection Policy} & OA & mAcc & OA & mAcc & OA & mAcc & OA & mAcc & OA & mAcc & OA & mAcc & OA & mAcc & OA & mAcc & OA & mAcc \\
\midrule

% -------------------------------------------
% SCALAR SCORES
% -------------------------------------------
\multirow{13}{*}{\shortstack{\textbf{Scalar}\\\textbf{Scores}}}
& \textit{Global Selection} & & & & & & & & & & & & & & & & & & \\
& \quad Loss     & 56.36 & 47.76 & 75.97 & 65.97 & 88.24 & 82.43 & 35.84 & 35.60 & 52.36 & 51.37 & 63.15 & 61.10 & 55.21 & 20.12 & 60.13 & 25.34 & 67.87 & 25.38 \\
& \quad GradNorm & 44.61 & 35.72 & 70.22 & 59.32 & 87.96 & 79.62 & 45.70 & 36.22 & 51.84 & 42.47 & 60.40 & 50.63 & 62.82 & 22.60 & 67.39 & 27.18 & 71.06 & 31.54 \\
& \quad Entropy  & 58.71 & 49.24 & 77.84 & 68.37 & 88.37 & 82.74 & 34.46 & 34.80 & 50.97 & 49.68 & 63.21 & 62.30 & 66.47 & 24.06 & 75.59 & 33.00 & 79.57 & 41.27 \\
& \quad EL2N     & 55.27 & 44.80 & 73.54 & 64.81 & 88.41 & 82.83 & 35.77 & 35.61 & 51.39 & 50.62 & 63.98 & 62.90 & 52.23 & 19.45 & 62.53 & 23.17 & 65.81 & 24.97 \\
\cmidrule{2-20}

& \textit{Stratified Sampling} & & & & & & & & & & & & & & & & & & \\
& \quad DRoP         & 41.86 & 45.02 & 55.63 & 57.47 & 64.82 & 65.32 & 38.06 & 39.77 & 46.29 & 48.54 & 58.39 & 58.71 & 45.50 & 54.73 & 54.79 & 60.72 & 64.45 & 69,40 \\
& \quad Balance-EL2N        & 74.39 & 69.39 & 83.23 & 81.43 & 89.82 & 88.70 & 49.34 & 49.14 & 59.02 & 56.92 & 65.68 & 64.02 & 78.53 & 74.45 & 82.89 & 79.26 & 84.46 & 81.79 \\
& \quad NUCS-EL2N        & 82.41 & 78.72 & 86.26 & 85.04 & 89.62 & 88.51 & 54.41 & 52.46 & 61.76 & 59.82 & 67.93 & 66.77 & 78.17 & 71.89 & 80.66 & 75.90 & 85.02 & 79.92 \\
& \quad CCS-CP-EL2N      & 81.44 & 74.55 & 85.49 & 79.71 & 89.34 & 82.39 & 55.59 & 48.89 & 61.69 & 54.87 & 68.25 & 62.12 & 80.90 & 70.30 & 83.81 & 66.43 & 84.81 & 67.80 \\

\midrule
\midrule

% -------------------------------------------
% VECTOR EMBEDDINGS
% -------------------------------------------
\multirow{8}{*}{\shortstack{\textbf{Vector}\\\textbf{Emb.}}}
& \textit{Global Selection} & & & & & & & & & & & & & & & & & & \\
& \quad Herding  & 81.89 & 74.25 & 87.70 & 80.68 & 89.10 & 83.84 & 50.21 & 44.63 & 59.54 & 53.77 & 66.58 & 61.11 & 82.21 & 60.02 & 83.99 & 62.18 & 86.98 & 70.97 \\
& \quad K-Center & 75.61 & 75.82 & 86.51 & 84.29 & 90.80 & 88.08 & 36.16 & 32.25 & 42.05 & 39.09 & 54.19 & 53.89 & 68.91 & 64.72 & 80.99 & 72.73 & 80.14 & 75.07 \\
& \quad FL-RBF   & 88.00 & 81.83        & 90.48 & 87.01 & 92.13 & 89.22 & 55.56 & 50.13  & 62.11 & 56.99 & 67.41 & 63.09 & 85.52 & 68.36 & 87.71 & 73.25 & 88.77 & 78.32 \\
\cmidrule{2-20}

& \textbf{SGS (Ours)} & & & & & & & & & & & & & & & & & & \\
& \quad FL-RBF (k=0.4)
& 88.33 & 83.96 & 90.72 & 87.37 & 91.89 & 89.42
& 57.84 & 53.47 & 64.08 & 60.09 & 69.18 & 64.96
& 84.95 & 74.89 & 87.41 & 77.90 & 88.72 & 81.61 \\
& \quad \dlt{$\Delta$ vs FL-RBF}
& \dlt{+0.33} & \dlt{+2.13} & \dlt{+0.24} & \dlt{+0.36} & \dlt{-0.24} & \dlt{+0.20}
& \dlt{+2.28} & \dlt{+3.34} & \dlt{+1.97} & \dlt{+3.10} & \dlt{+1.77} & \dlt{+1.87}
& \dlt{-0.57} & \dlt{+6.53} & \dlt{-0.30} & \dlt{+4.65} & \dlt{-0.05} & \dlt{+3.29} \\
\addlinespace[1pt]

& \quad FL-RBF (k=0.6)
& 87.39 & 83.64 & 90.56 & 87.40 & 91.77 & 89.92
& 56.83 & 53.76 & 62.73 & 59.44 & 69.70 & 66.44
& 84.27 & 76.49 & 85.89 & 80.30 & 88.23 & 82.65 \\
& \quad \dlt{$\Delta$ vs FL-RBF}
& \dlt{-0.61} & \dlt{+1.81} & \dlt{+0.08} & \dlt{+0.39} & \dlt{-0.36} & \dlt{+0.70}
& \dlt{+1.27} & \dlt{+3.63} & \dlt{+0.62} & \dlt{+2.45} & \dlt{+2.29} & \dlt{+3.35}
& \dlt{-1.25} & \dlt{+8.13} & \dlt{-1.82} & \dlt{+7.05} & \dlt{-0.54} & \dlt{+4.33} \\

\bottomrule
\end{tabular}%
}
\vspace{-1.0em}
\end{table*}

\subsection{Main Results}
Fig.~\ref{fig:main_experiment} demonstrates the superiority of our \our{} (based on FL-RBF) compared to other pruning methods. The consistently best performance on both OA and mAcc, especially, mAcc values, illustrates the superiority of our methods over other baselines, owing to the established shared principles. We next separately analyze their effects in detail.

\textit{\textbf{Effectiveness of Term (B) Principles.}} Term (B) principles are realized through Calibrated Soft Labels (CSL) combined with Embedding Geometry Distillation (EGD). Fig.~\ref{fig:term B} demonstrates the gradual improvement achieved by these two methods. Improvements in both mAcc and OA can be observed after adding CSL, with further gains when incorporating EGD. This confirms our analysis that an improved structural likelihood can benefit across different priors. There is also a generally decaying trend when both EGD and CSL are applied under increasing budgets. This is because EGD primarily helps to mitigate the data scarcity issue by transferring information about the internal structure of the teacher's embedding; as more samples are selected, its effect weakens.

\textit{\textbf{Effectiveness of Term (A) Principles.}} First, we isolate the impact of the selection signal.  By applying our EGD and CSL framework across all baselines upon PointNet++ (Tab.~\ref{tab:fair comparison}), we find that naive global selection using \textbf{embedding signal} consistently outperforms classifier-derived scalar signals, confirming the audit in Tab.~\ref{tab:signal_audit}. These scalar signals are less class-comparable, therefore the selection is essential broken.  Crucially, since scalar signals correlate heavily with class size, methods that rely on them to estimate
\begin{wrapfigure}{l}{0.5\linewidth}
\centering
\vspace{-16pt}
\includegraphics[width=\linewidth]{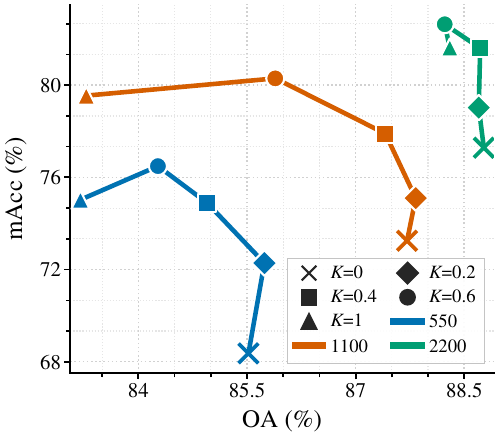}
\vspace{-1.7em}
\caption{Preferences steering with different K.}
\vspace{-20pt}
\label{fig:trade-off}
\end{wrapfigure}
``difficulty" (e.g., DRoP, NUCS, CCS-CP) can improve results yet still unreliable under such setting.
On ShapeNet55, these complex allocators are outperformed even by simple stratified sampling.
This validates our first principle: in the presence of strong inductive bias but weak scalar calibration, \emph{what you measure} (embedding geometry) matters more than \emph{how you allocate} based on noisy metrics.
Second, we analyze the necessity of the safety floor by varying the steering parameter $k$ in SGS (Fig.~\ref{fig:trade-off}).
The trajectory reveals two distinct regimes, validating our theory.

\textbf{(i)} Shared High-Reward Regime ($k \le 0.2$): Increasing the floor from $k=0$ (pure global) significantly boosts mAcc with negligible or no cost to OA. This confirms our second principle: a minimum floor is beneficial across different target priors. \\
\textbf{(ii)} Trade-off Regime ($k > 0.2$): Beyond the safety floor, a Pareto front emerges, allowing users to exchange OA for mAcc according to preference.
Notably, the pure mechanical balance ($k=1$) is often Pareto-dominated, reflecting its inability to adapt to the varying class complexities $c_y$ inherent in the Therm~\ref{thm:opt_alloc} (optimal allocation).

\subsection{Additional Results}
\textbf{Cross-Architecture Transfer.} We further validate whether our shared principles hold under cross-architecture pruning, where the teacher and student models have different architectures. Since EGD focuses solely on likelihood structure, we adopt RKD, which is free from embedding dimension constraints. Results in Fig.~\ref{fig:cross} demonstrate that cross-architecture pruning and training can be achieved with limited data (only 1,100 images). Notably, the student's performance is not hindered by architectural differences and can even benefit from a stronger teacher. For instance, PointNet++ and PointNext show improved mAcc when using PointVector as the teacher, highlighting the effectiveness of our principles in a broader setting.

\begin{figure}[tbp]
\centering
\includegraphics[width=0.93\linewidth]{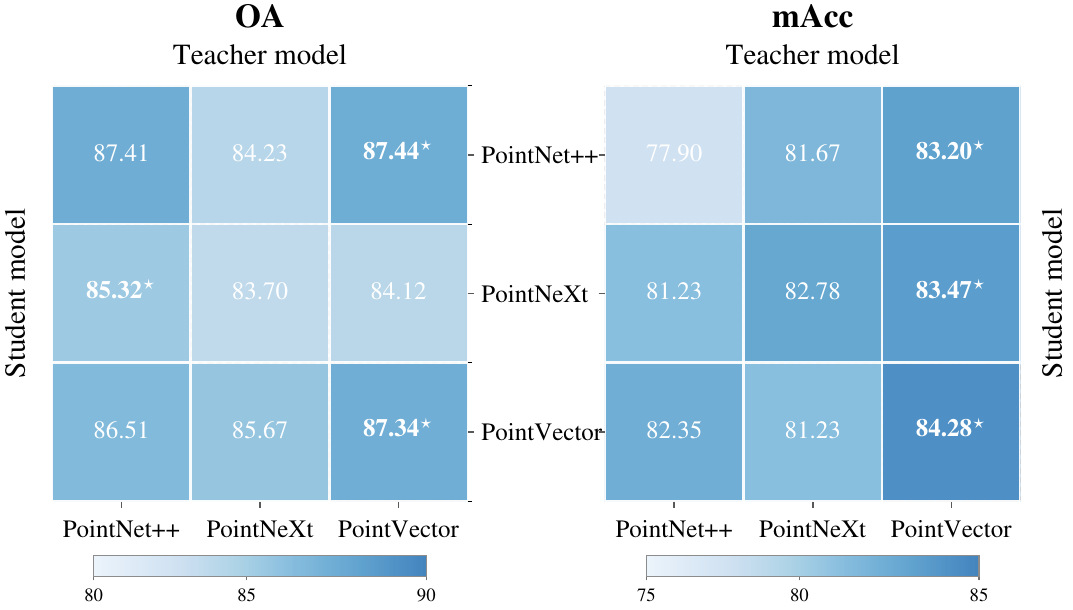}
\caption{Transferability of \our{} across different architecture. Results are based on $m=1100$ on ShapeNet55.}
\label{fig:cross}
\vspace{-15pt}
\end{figure}

\textbf{Generalization to Other Modalities.} Finally, we extend beyond the point cloud modality and show that our strategy can transfer to other modalities. Fig.~\ref{fig:meshnet} shows the results on the MeshNet with \our{} leading both OA and mAcc, validating that the effectiveness of our shared principles is not limited to point cloud, showing the potential to extend to other modalities as well.

\vspace{-0.05in}
\section{Conclusion}
In this work, we investigated 3D data pruning under severe and inherent class imbalance, where the gap between overall accuracy and mean accuracy makes pruning especially challenging. Instead of treating this as a purely unavoidable trade-off, we used a quadrature-based error decomposition to uncover shared optimization directions that benefit both evaluation priors. Based on this insight, we proposed \our{}, which first improves the performance floor by reducing prior-mismatch through calibrated geometric distillation and lowering representation error via safety-aware selection, and then handles the remaining preference gap with a lightweight steering wrapper. Extensive experiments show that this principled {\em foundation first, steering second} strategy consistently delivers strong and flexible performance across diverse architectures and evaluation settings.

\section*{Impact Statement}

This paper presents work whose goal is to advance the field of Machine Learning. There are many potential societal consequences of our work, none of which we feel must be specifically highlighted here.

\bibliography{example_paper}
\bibliographystyle{icml2026}

%%%%%%%%%%%%%%%%%%%%%%%%%%%%%%%%%%%%%%%%%%%%%%%%%%%%%%%%%%%%%%%%%%%%%%%%%%%%%%%
% APPENDIX
%%%%%%%%%%%%%%%%%%%%%%%%%%%%%%%%%%%%%%%%%%%%%%%%%%%%%%%%%%%%%%%%%%%%%%%%%%%%%%%
\newpage
\appendix
\onecolumn
\section*{Appendix}

\textbf{Roadmap.} This appendix provides supporting materials organized as follows:
\begin{itemize}
\item \textbf{Section~\ref{proof}}: Mathematical proofs for the generalization gap, class-wise error decomposition, Rademacher complexity bounds, optimal budget allocation, and distillation robustness.
\item \textbf{Section~\ref{additional experiment}}: Detailed experimental settings, including model architectures for PointNeXt and MeshMAE, and the three-phase training protocols.
\item \textbf{Section~\ref{additional results}}: Extended empirical evaluations across diverse 3D architectures and performance analysis under relaxed data pruning budgets.
\end{itemize}

\section{Proof} \label{proof}
\subsection{Proof for Lemma~\ref{lem:gen_gap}}
\begin{proof}
Let $q = q_{S,w}$ for brevity and $D := D_{\mathcal{G}}(p, q)$.
For any $\theta$,
\[
\mathcal{L}(\theta) \leq \hat{\mathcal{L}}_q(\theta) + D \quad \text{and} \quad \hat{\mathcal{L}}_q(\theta) \leq \mathcal{L}(\theta) + D
\]
by the definition of $D$ (taking $g = \ell_\theta$). Since $\hat{\theta}$ minimizes $\hat{\mathcal{L}}_q$,
\[
\hat{\mathcal{L}}_q(\hat{\theta}) \leq \hat{\mathcal{L}}_q(\theta^*).
\]
Therefore,
\[
\mathcal{L}(\hat{\theta}) \leq \hat{\mathcal{L}}_q(\hat{\theta}) + D \leq \hat{\mathcal{L}}_q(\theta^*) + D \leq \mathcal{L}(\theta^*) + 2D,
\]
which proves the claim.
\end{proof}

\subsection{Proof for Class-wise Decomposition}
\begin{proof}
Expand:
\begin{align*}
\mathbb{E}_p[\ell_\theta] - \mathbb{E}_q[\ell_\theta] &= \sum_y \pi_y^{\mathrm{tar}} \mathbb{E}_{p_y}[\ell_\theta] - \sum_y \rho_y \mathbb{E}_{q_y}[\ell_\theta] \\
&= \sum_y \pi_y^{\mathrm{tar}} \left( \mathbb{E}_{p_y}[\ell_\theta] - \mathbb{E}_{q_y}[\ell_\theta] \right) + \sum_y (\pi_y^{\mathrm{tar}} - \rho_y) \mathbb{E}_{q_y}[\ell_\theta].
\end{align*}

Take absolute values and apply triangle inequality:
\[
\leq \sum_y \pi_y^{\mathrm{tar}} \left| \mathbb{E}_{p_y}[\ell_\theta] - \mathbb{E}_{q_y}[\ell_\theta] \right| + \sum_y \left| \pi_y^{\mathrm{tar}} - \rho_y \right| \cdot \left| \mathbb{E}_{q_y}[\ell_\theta] \right|.
\]

Since $|\ell_\theta| \leq B$, we have $|\mathbb{E}_{q_y}[\ell_\theta]| \leq B$. Thus
\[
\leq \sum_y \pi_y^{\mathrm{tar}} \left| \mathbb{E}_{p_y}[\ell_\theta] - \mathbb{E}_{q_y}[\ell_\theta] \right| + B \sum_y \left| \pi_y^{\mathrm{tar}} - \rho_y \right| = \sum_y \pi_y^{\mathrm{tar}} \cdot |\cdot| + 2B \| \pi^{\mathrm{tar}} - \rho \|_{\mathrm{TV}}.
\]

The uniform version follows by taking $\sup_\theta$ on both sides.
\end{proof}

\subsection{Proof for Lemma~\ref{lem:class_rate}}
\begin{lemma}[Class-wise error bound (formal)]
\label{lem:classwise_bound_app}
Fix a class $y$. Let $S_y=\{x^{(y)}_1,\ldots,x^{(y)}_{m_y}\}$ be drawn i.i.d.\ from the
class-conditional distribution $p_y$, and define the empirical measure
\[
q_y := \frac{1}{m_y}\sum_{i=1}^{m_y}\delta_{x^{(y)}_i}.
\]
Assume the loss is bounded: $0 \le \ell_\theta(x) \le B$ for all $\theta\in\Theta$ and $x$.
Let $\mathcal{G}_y := \{\ell_\theta(\cdot): \theta\in\Theta\}$ and define the empirical
Rademacher complexity
\[
\widehat{\mathfrak{R}}_{S_y}(\mathcal{G}_y)
:= \mathbb{E}_{\sigma}\left[\sup_{g\in\mathcal{G}_y}\frac{1}{m_y}\sum_{i=1}^{m_y}\sigma_i g\!\left(x^{(y)}_i\right)\right],
\]
where $\sigma_1,\ldots,\sigma_{m_y}$ are i.i.d.\ Rademacher variables.
Then for any $\delta\in(0,1)$, with probability at least $1-\delta$ over $S_y$,
\begin{equation}
\sup_{\theta\in\Theta}\left|\mathbb{E}_{p_y}[\ell_\theta]-\mathbb{E}_{q_y}[\ell_\theta]\right|
\;\le\;
2\,\widehat{\mathfrak{R}}_{S_y}(\mathcal{G}_y)
\;+\;
3B\sqrt{\frac{\log(4/\delta)}{2m_y}}.
\label{eq:classwise_bound_app}
\end{equation}
Moreover, if (for the considered hypothesis class) there exists a constant $c_y>0$ such that
\begin{equation}
\widehat{\mathfrak{R}}_{S_y}(\mathcal{G}_y)\le \frac{c_y}{2\sqrt{m_y}},
\label{eq:rademacher_cy_app}
\end{equation}
then with the same probability,
\begin{equation}
\sup_{\theta\in\Theta}\left|\mathbb{E}_{p_y}[\ell_\theta]-\mathbb{E}_{q_y}[\ell_\theta]\right|
\;\le\;
\frac{c_y}{\sqrt{m_y}}
\;+\;
3B\sqrt{\frac{\log(4/\delta)}{2m_y}}.
\label{eq:classwise_cy_app}
\end{equation}
\end{lemma}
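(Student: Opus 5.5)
This is the standard two-sided uniform deviation bound via symmetrization and McDiarmid's inequality. I will bound the two one-sided suprema separately, each with confidence $1-\delta/2$, and combine them with a union bound.

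\textbf{Step 1: concentration of the one-sided supremum.} Define
\[
\Phi(S_y):=\sup_{\theta\in\Theta}\bigl(\mathbb{E}_{p_y}[\ell_\theta]-\mathbb{E}_{q_y}[\ell_\theta]\bigr).
\]
Since $0\le\ell_\theta\le B$, replacing one point $x^{(y)}_i$ changes $\mathbb{E}_{q_y}[\ell_\theta]$ by at most $B/m_y$, uniformly in $\theta$. Hence $\Phi$ has bounded differences $B/m_y$. McDiarmid's inequality then gives, with probability at least $1-\delta/4$,
\[
\Phi\le\mathbb{E}\Phi+B\sqrt{\log(4/\delta)/(2m_y)}.
\]

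\textbf{Step 2: symmetrization.} Introduce a ghost sample $S'_y$ drawn i.i.d.\ from $p_y$ and Rademacher signs $\sigma_i$. The standard argument (Jensen's inequality, pulling the supremum inside the expectation over $S'_y$) yields
\[
\mathbb{E}\Phi\le 2\,\mathbb{E}_{S_y}\widehat{\mathfrak{R}}_{S_y}(\mathcal{G}_y).
\]

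\textbf{Step 3: from expected to empirical Rademacher complexity.} The map $S_y\mapsto\widehat{\mathfrak{R}}_{S_y}(\mathcal{G}_y)$ also has bounded differences $B/m_y$, since each $g$ takes values in $[0,B]$. A second application of McDiarmid's inequality gives, with probability at least $1-\delta/4$,
\[
\mathbb{E}\widehat{\mathfrak{R}}\le\widehat{\mathfrak{R}}_{S_y}+B\sqrt{\log(4/\delta)/(2m_y)}.
\]
Combining Steps 1–3 by a union bound, with probability at least $1-\delta/2$,
\[
\Phi\le 2\widehat{\mathfrak{R}}_{S_y}+3B\sqrt{\log(4/\delta)/(2m_y)}.
\]
The factor $3$ arises as $1$ from Step 1 plus $2$ from the doubled Rademacher term.

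\textbf{Step 4: the other direction.} Repeat Steps 1–3 for $\sup_\theta(\mathbb{E}_{q_y}-\mathbb{E}_{p_y})$, which is the same argument applied to the class $-\mathcal{G}_y$. Its Rademacher complexity equals that of $\mathcal{G}_y$ because $\sigma\stackrel{d}{=}-\sigma$. A union bound over the two directions gives total failure probability at most $\delta$, which yields \eqref{eq:classwise_bound_app}. Substituting the assumption \eqref{eq:rademacher_cy_app}, so that $2\widehat{\mathfrak{R}}_{S_y}\le c_y/\sqrt{m_y}$, immediately gives \eqref{eq:classwise_cy_app}.

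\textbf{Main obstacle.} The only delicate point is the bookkeeping of the confidence levels. Four events are involved: two McDiarmid applications for each of the two directions, each at level $\delta/4$. It must be checked that these produce exactly $\log(4/\delta)$ and the constant $3$. The Rademacher concentration in Step 3 can be shared between the two directions, so it is not an additional cost, and the allocation closes with the stated constant.
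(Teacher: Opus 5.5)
Your proof is correct and has the same structure as the paper's. Both bound the two one-sided uniform deviations at confidence $1-\delta/2$ each, handle the lower tail with a reflected class whose empirical Rademacher complexity is unchanged by the symmetry of $\sigma$, and finish with a union bound. The paper cites Theorem 3.3 of Mohri et al.\ as a black box on the normalized classes $\ell_\theta/B$ and $1-\ell_\theta/B$. You instead re-derive that theorem inline (McDiarmid, symmetrization, McDiarmid, each at level $\delta/4$) and work with $-\mathcal{G}_y$ directly, which avoids the normalization step.
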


\begin{proof}
Define the normalized loss class
\[
\widetilde{\mathcal{G}}_y := \left\{x \mapsto \frac{\ell_\theta(x)}{B} : \theta\in\Theta\right\}.
\]
Since $0\le \ell_\theta(x)\le B$, every function in $\widetilde{\mathcal{G}}_y$ maps into $[0,1]$.
Apply the Rademacher-complexity generalization bound (Theorem 3.3 in \citep{mohri2018foundations})
to the class $\widetilde{\mathcal{G}}_y$ with confidence parameter $\delta/2$. With probability
at least $1-\delta/2$, the following holds simultaneously for all $\theta\in\Theta$:
\begin{equation}
\mathbb{E}_{p_y}\!\left[\frac{\ell_\theta}{B}\right]
\le
\mathbb{E}_{q_y}\!\left[\frac{\ell_\theta}{B}\right]
+
2\,\widehat{\mathfrak{R}}_{S_y}(\widetilde{\mathcal{G}}_y)
+
3\sqrt{\frac{\log(4/\delta)}{2m_y}}.
\label{eq:one_side_upper_app}
\end{equation}
To obtain the reverse inequality, apply the same theorem to the class
$1-\widetilde{\mathcal{G}}_y := \{x\mapsto 1-g(x): g\in \widetilde{\mathcal{G}}_y\}$
(which also maps into $[0,1]$) with confidence parameter $\delta/2$. With probability at least
$1-\delta/2$, for all $\theta\in\Theta$,
\[
\mathbb{E}_{p_y}\!\left[1-\frac{\ell_\theta}{B}\right]
\le
\mathbb{E}_{q_y}\!\left[1-\frac{\ell_\theta}{B}\right]
+
2\,\widehat{\mathfrak{R}}_{S_y}(1-\widetilde{\mathcal{G}}_y)
+
3\sqrt{\frac{\log(4/\delta)}{2m_y}}.
\]
Rearranging gives
\begin{equation}
\mathbb{E}_{p_y}\!\left[\frac{\ell_\theta}{B}\right]
\ge
\mathbb{E}_{q_y}\!\left[\frac{\ell_\theta}{B}\right]
-
2\,\widehat{\mathfrak{R}}_{S_y}(1-\widetilde{\mathcal{G}}_y)
-
3\sqrt{\frac{\log(4/\delta)}{2m_y}}.
\label{eq:one_side_lower_app}
\end{equation}
By a union bound over the two events, with probability at least $1-\delta$, both
\eqref{eq:one_side_upper_app} and \eqref{eq:one_side_lower_app} hold simultaneously for all
$\theta$. Noting that
$\widehat{\mathfrak{R}}_{S_y}(1-\widetilde{\mathcal{G}}_y)=\widehat{\mathfrak{R}}_{S_y}(\widetilde{\mathcal{G}}_y)$
(translation by a constant does not change Rademacher complexity, and sign-flips are absorbed by
Rademacher symmetry), we obtain
\[
\sup_{\theta\in\Theta}\left|
\mathbb{E}_{p_y}\!\left[\frac{\ell_\theta}{B}\right]
-
\mathbb{E}_{q_y}\!\left[\frac{\ell_\theta}{B}\right]
\right|
\le
2\,\widehat{\mathfrak{R}}_{S_y}(\widetilde{\mathcal{G}}_y)
+
3\sqrt{\frac{\log(4/\delta)}{2m_y}}.
\]
Multiplying by $B$ yields \eqref{eq:classwise_bound_app} and using
$\widehat{\mathfrak{R}}_{S_y}(\widetilde{\mathcal{G}}_y) = \widehat{\mathfrak{R}}_{S_y}(\mathcal{G}_y)/B$
gives the form stated in the lemma.

Finally, condition \eqref{eq:rademacher_cy_app} is satisfied by many hypothesis classes.
In particular, for spectrally-controlled neural networks, the scale-sensitive complexity depends on
data norm and products of spectral norms (see, e.g., \cite{bartlett2017spectrally}); absorbing the
resulting constants/log factors into $c_y$ yields \eqref{eq:rademacher_cy_app}, and plugging it into
\eqref{eq:classwise_bound_app} gives \eqref{eq:classwise_cy_app}.
\end{proof}

% ------------------------------------------------------------
\subsection{Proof for Theorem~\ref{thm:opt_alloc}}
% ------------------------------------------------------------

\begin{theorem}[Optimal allocation for Term (A)]
\label{thm:alloc_app}
Let $\gamma>0$ be a common exponent across classes, and let $\pi_y^{\mathrm{tar}}>0$, $c_y>0$.
Consider the continuous relaxation of the budget allocation problem:
\[
\min_{\{m_y>0\}}\ \sum_y \pi_y^{\mathrm{tar}}\frac{c_y}{m_y^\gamma}
\quad \text{s.t.}\quad \sum_y m_y = m,
\]
where $m>0$ is the total budget. Then the unique optimizer satisfies
\begin{equation}
m_y^* \;=\; m\cdot
\frac{(\pi_y^{\mathrm{tar}}c_y)^{\frac{1}{1+\gamma}}}{\sum_{y'}(\pi_{y'}^{\mathrm{tar}}c_{y'})^{\frac{1}{1+\gamma}}}
\quad\Longleftrightarrow\quad
m_y^* \propto (\pi_y^{\mathrm{tar}}c_y)^{\frac{1}{1+\gamma}}.
\label{eq:alloc_app}
\end{equation}
\end{theorem}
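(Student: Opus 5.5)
The plan is to treat this as a strictly convex optimization over the open positive orthant with a single linear equality constraint, and to solve it with a Lagrange multiplier. Write $a_y:=\pi_y^{\mathrm{tar}}c_y>0$ and $F(\{m_y\})=\sum_y a_y m_y^{-\gamma}$. Each map $t\mapsto a_y t^{-\gamma}$ is strictly convex on $(0,\infty)$ for $\gamma>0$, since its second derivative is $a_y\gamma(\gamma+1)t^{-\gamma-2}>0$. Hence $F$ is a separable sum of strictly convex functions, and so it is strictly convex. The feasible set $\{m_y>0,\ \sum_y m_y=m\}$ is convex. Together these give uniqueness of any minimizer, so it will suffice to produce one point that satisfies the first-order (KKT) conditions.

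The steps, in order, are as follows.

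\textbf{Stationarity.} Form $\mathcal{J}=\sum_y a_y m_y^{-\gamma}+\lambda(\sum_y m_y-m)$. Setting $\partial\mathcal{J}/\partial m_y=0$ gives $\gamma a_y m_y^{-\gamma-1}=\lambda$, so $m_y=(\gamma a_y/\lambda)^{1/(1+\gamma)}$. In particular $m_y\propto a_y^{1/(1+\gamma)}$, with a common constant $(\gamma/\lambda)^{1/(1+\gamma)}$.

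\textbf{Budget.} Impose $\sum_y m_y=m$ to fix that constant. This yields exactly \eqref{eq:alloc_app}, and all entries are strictly positive, so the point is feasible and interior.

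\textbf{Global optimality.} Because the objective is convex and the constraint is affine, a KKT point is a global minimizer. As a more elementary alternative, I can verify directly that for any feasible $m'$, convexity gives $F(m')\ge F(m^*)+\nabla F(m^*)\cdot(m'-m^*)$. The gradient $\nabla F(m^*)=-\lambda\mathbf{1}$ is orthogonal to $m'-m^*$, because both allocations sum to $m$. So $F(m')\ge F(m^*)$, and strict convexity makes the inequality strict whenever $m'\neq m^*$.

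The main subtlety is existence and the boundary, because the domain is open. The first-order argument above handles this without any compactness, since it exhibits an interior point that dominates every other feasible point. For intuition, one can also note that $F\to\infty$ as any $m_y\to 0^+$, so no minimizing sequence can escape to the boundary. A sanity check, or an alternative route, is Hölder's inequality with exponents $1+\gamma$ and $(1+\gamma)/\gamma$. This gives $\sum_y a_y^{1/(1+\gamma)}\le(\sum_y a_y m_y^{-\gamma})^{1/(1+\gamma)}(\sum_y m_y)^{\gamma/(1+\gamma)}$, that is, $F\ge(\sum_y a_y^{1/(1+\gamma)})^{1+\gamma}/m^\gamma$. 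Equality holds exactly when $a_y m_y^{-\gamma}\propto m_y$, which recovers the same allocation and confirms that the minimum value is attained uniquely there.
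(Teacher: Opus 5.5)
Your proposal is correct and follows the same route as the paper: set $a_y=\pi_y^{\mathrm{tar}}c_y$, solve the Lagrangian stationarity condition $\gamma a_y m_y^{-(1+\gamma)}=\lambda$, and normalize with the budget constraint. Your strict-convexity step (the gradient $-\lambda\mathbf{1}$ is orthogonal to feasible directions) is a worthwhile addition, because the paper's proof only exhibits the stationary point and never justifies the global optimality and uniqueness that the statement claims.
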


\begin{proof}
Define $a_y := \pi_y^{\mathrm{tar}}c_y>0$ and consider
\[
\min_{\{m_y>0\}}\ \sum_y a_y m_y^{-\gamma}\quad \text{s.t.}\quad \sum_y m_y = m.
\]
Form the Lagrangian
\[
\mathcal{J}(m_1,\ldots,m_C,\lambda)
=
\sum_y a_y m_y^{-\gamma} + \lambda\Big(\sum_y m_y - m\Big).
\]
Taking derivatives and setting them to zero yields, for each class $y$,
\[
\frac{\partial \mathcal{J}}{\partial m_y}
=
-\gamma a_y m_y^{-(\gamma+1)} + \lambda
=0
\quad\Longrightarrow\quad
m_y^{\gamma+1} = \frac{\gamma a_y}{\lambda}.
\]
Thus $m_y \propto a_y^{1/(1+\gamma)}$. Enforcing the constraint $\sum_y m_y=m$ gives
\[
m_y^* = m\cdot \frac{a_y^{1/(1+\gamma)}}{\sum_{y'} a_{y'}^{1/(1+\gamma)}}
=
m\cdot
\frac{(\pi_y^{\mathrm{tar}}c_y)^{\frac{1}{1+\gamma}}}{\sum_{y'}(\pi_{y'}^{\mathrm{tar}}c_{y'})^{\frac{1}{1+\gamma}}}.
\]
This is the stated solution \eqref{eq:alloc_app}.
\end{proof}

\subsection{Proof of Proposition~\ref{prop:kd_weight_robust}}
% ------------------------------------------------------------
% Appendix: Proof of Proposition~\ref{prop:kd_weight_robust}
% ------------------------------------------------------------
\begin{proof}
Fix a subset $S$ and weights $w$ with $w_i>0$ for all $i\in S$.
For brevity write
\[
T_i(\cdot) := T(\cdot\mid x_i)\in\Delta^{|\mathcal{Y}|-1},
\qquad
f_i(\cdot) := f_\theta(\cdot\mid x_i)\in\Delta^{|\mathcal{Y}|-1}.
\]
Recall that the (categorical) cross-entropy is
\[
\mathrm{CE}(T_i,f_i) \;:=\; -\sum_{y\in\mathcal{Y}} T_i(y)\,\log f_i(y),
\]
with the usual convention that if $T_i(y)>0$ and $f_i(y)=0$ then
$\mathrm{CE}(T_i,f_i)=+\infty$.

Using the standard decomposition of cross-entropy into entropy plus KL divergence,
for each $i\in S$ we have
\begin{align*}
\mathrm{CE}(T_i,f_i)
&= -\sum_{y} T_i(y)\log f_i(y) \\
&= -\sum_{y} T_i(y)\log T_i(y) \;+\; \sum_{y} T_i(y)\log\frac{T_i(y)}{f_i(y)} \\
&=: H(T_i) \;+\; \mathrm{KL}\!\left(T_i\,\|\,f_i\right),
\end{align*}
where $H(T_i)$ is the Shannon entropy and $\mathrm{KL}(\cdot\|\cdot)$ is the
Kullback--Leibler divergence.
By Gibbs' inequality, $\mathrm{KL}(T_i\|f_i)\ge 0$, with equality if and only if
$f_i(\cdot)=T_i(\cdot)$ (as distributions on $\mathcal{Y}$).

Plugging this into the weighted KD objective~\eqref{eq:kd_emp_main} yields
\begin{align*}
\hat{\mathcal{L}}_{\mathrm{KD}}^{S,w}(\theta)
&= \sum_{i\in S} w_i\,\mathrm{CE}(T_i,f_i) \\
&= \sum_{i\in S} w_i\,H(T_i)
\;+\;
\sum_{i\in S} w_i\,\mathrm{KL}\!\left(T_i\,\|\,f_i\right).
\end{align*}
The first term $\sum_{i\in S} w_i H(T_i)$ does not depend on $\theta$.
The second term is a weighted sum of nonnegative quantities, hence
\[
\hat{\mathcal{L}}_{\mathrm{KD}}^{S,w}(\theta)
\;\ge\;
\sum_{i\in S} w_i\,H(T_i),
\]
and equality holds if and only if $\mathrm{KL}(T_i\|f_i)=0$ for every $i\in S$,
i.e., if and only if $f_\theta(\cdot\mid x_i)=T(\cdot\mid x_i)$ for all $i\in S$.

By assumption, the student is expressive enough to interpolate the teacher on $S$,
so there exists at least one $\theta$ satisfying
$f_\theta(\cdot\mid x_i)=T(\cdot\mid x_i)$ for all $i\in S$.
For any such interpolating $\theta$, the KL terms vanish and the lower bound is
achieved; therefore every interpolating solution is a global minimizer of
$\hat{\mathcal{L}}_{\mathrm{KD}}^{S,w}$.
Since the argument holds for any choice of strictly positive weights $w$,
the claim follows.
\end{proof}

\section{Additional Experimental Setup} \label{additional experiment}

\subsection{Model Architecture}
We use PointNeXt-S and PointVector-S. For MeshMAE and PointMAE we use their default architecture when processing corresponding datasets. The PointMAE and MeshMAE model directly reuse their official checkpoints. We present the performance of model trained by us on different datasets in Tab~

\subsection{Training Strategy}\label{sec:training_strategy}

Our training pipeline is conducted in three phases, all using a fixed random seed of 42. We strictly follow the data augmentation protocols outlined in the original papers for each model.

\begin{table}[!ht]
\centering
\small
\caption{Cross-dataset evaluation across 3D backbones.
Overall accuracy (OA) and mean accuracy (mAcc) on ShapeNet55, ModelNet40, and ScanObjectNN.}
\label{tab:cross_dataset_backbones}
\vspace{-0.5em}
\setlength{\tabcolsep}{3.5pt}
\renewcommand{\arraystretch}{1.05}
\begin{tabular}{@{}lcccccc@{}}
\toprule
& \multicolumn{2}{c}{ShapeNet55} & \multicolumn{2}{c}{ModelNet40} & \multicolumn{2}{c}{ScanObjectNN} \\
\cmidrule(lr){2-3}\cmidrule(lr){4-5}\cmidrule(lr){6-7}
Model & OA & mAcc & OA & mAcc & OA & mAcc \\
\midrule
PointNeXt-S     & 91.11 & 83.47 & 93.15 & 90.60 & 87.20 & 85.80 \\
PointNet++      & 91.08 & 82.83 & 92.95 & 90.34 & 86.12 & 84.37 \\
PointVector-S   & 91.00 & 82.96 & 90.15 & 85.31 & 87.37 & 85.64 \\
\bottomrule
\end{tabular}
\end{table}

\paragraph{Initial Model Training.}
The base model is trained for 300 epochs with a batch size of 48. We utilize the AdamW optimizer with an initial learning rate of $1 \times 10^{-4}$ and a weight decay of $0.05$. The learning rate is managed by a cosine annealing scheduler with 2 epochs of linear warmup. Notably, we apply label smoothing of $0.2$ throughout this phase.

\paragraph{Class-Balanced Classifier Retraining.}
In this phase, we freeze the encoder and retrain the reinitialized classification head for 100 epochs. We employ a UniformClassSampler and mixed-precision (bfloat16) training. The learning rate is adjusted to $1.7 \times 10^{-4}$, while the weight decay remains $0.05$ and the warmup period stays at 2 epochs. Consistent with the initial phase, the label smoothing parameter is maintained at $0.2$.

\paragraph{Knowledge Distillation.}
The student model is trained from scratch on the pruned dataset for 300 epochs, reusing the optimizer configuration from the initial phase. For the distillation parameters, we set the temperature $\tau=5$ and the knowledge distillation weight $\alpha=0.8$. Regarding Relational Knowledge Distillation (RKD), we use a distance weight $\lambda_d=50$ and an angle weight $\lambda_a=100$, applied with an overall scaling factor of $\lambda=0.1$.

\section{Additional Results} \label{additional results}
\subsection{Additional Results on Different Architectures}
Tab.~\ref{tab:pointnext_pointvector} presents results on PointNeXt and PointVector, demonstrating that our method generalizes beyond the architectures in the main paper. Tab.~\ref{tab: MeshMAE} shows results on MeshMAE, a more modern mesh-based architecture, further confirming the robustness of our approach. These findings support the broad effectiveness of our proposed shared principles.

\begin{figure}[t]
\centering
\includegraphics[width=0.44\linewidth]{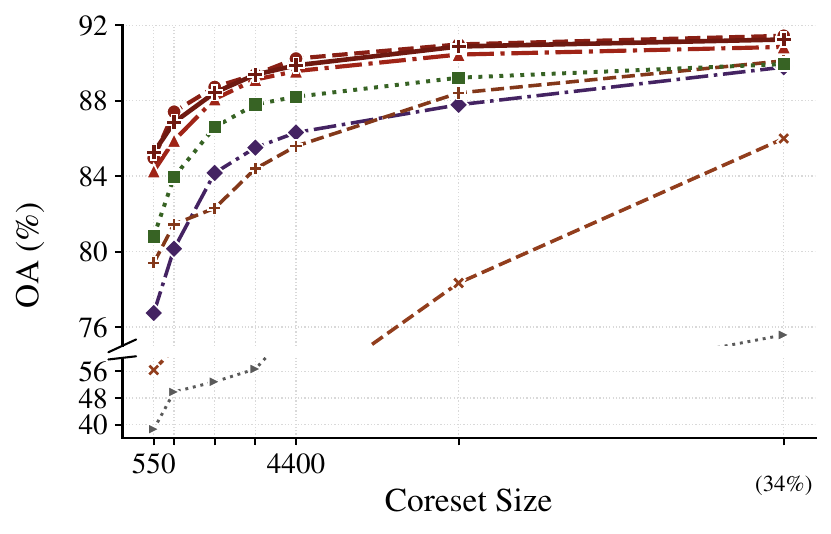}
\includegraphics[width=0.44\linewidth]{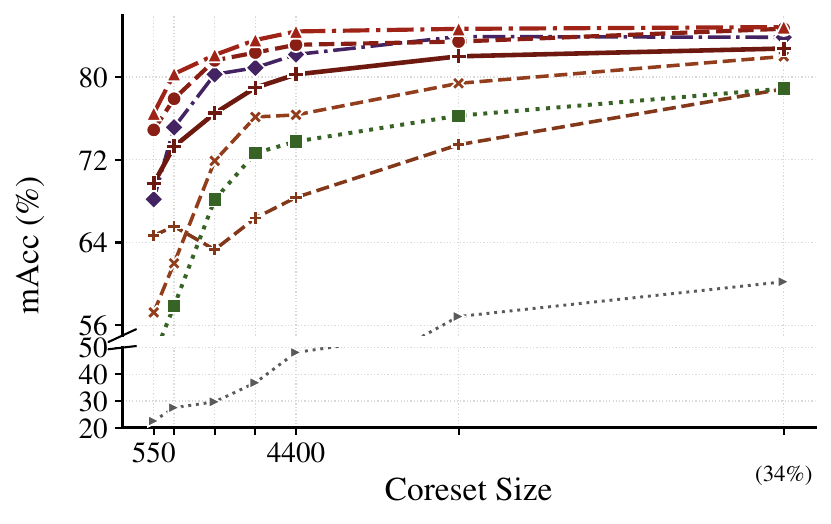}

\includegraphics[width=0.75\linewidth]{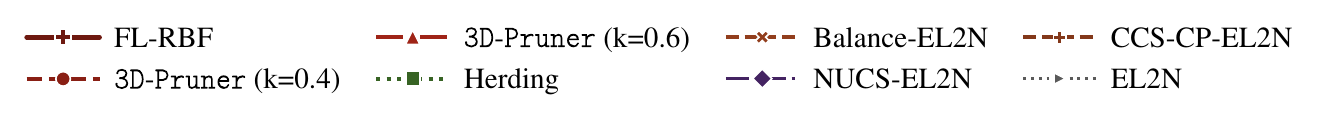}
\caption{Additional results on relaxed budgets on ShapeNet55. The gap on OA becomes smaller given larger budget, yet our \our{} remains the best on the mAcc.}
\label{fig:relaxed budget}
\end{figure}

\begin{table}[H]
\centering
\caption{Comparison of different pruning methods on ModelNet40, MeshMAE. The pruning methods are selected from the most competitive ones based on the performance on the MeshNet.}
\label{tab: MeshMAE}
\resizebox{0.56\textwidth}{!}{%
\begin{tabular}{ll cc cc cc}
\toprule
& & \multicolumn{6}{c}{\textbf{ModelNet40}} \\
& & \multicolumn{2}{c}{$m=400$} & \multicolumn{2}{c}{$m=800$} & \multicolumn{2}{c}{$m=1200$} \\
\cmidrule(lr){3-4} \cmidrule(lr){5-6} \cmidrule(lr){7-8}
\textbf{Metric Type} & \textbf{Selection Policy} & OA & mAcc & OA & mAcc & OA & mAcc \\
\midrule
% -------------------------------------------
% SCALAR SCORES
% -------------------------------------------
\multirow{4}{*}{\shortstack{\textbf{Scalar}\\\textbf{Scores}}}
& \textit{Stratified Sampling} & & & & & & \\
& \quad Balance-EL2N        & 60.60 & 56.71 & 76.83 & 74.72 & 82.85 & 80.68 \\
& \quad NUCS-EL2N        & 57.06 & 52.99 & 72.89 & 70.68 & 81.27 & 78.40 \\
& \quad CCS-CP-EL2N      & 68.25 & 57.17 & 78.83 & 70.07 & 83.12 & 75.21 \\
\midrule
\midrule
% -------------------------------------------
% VECTOR EMBEDDINGS
% -------------------------------------------
\multirow{7}{*}{\shortstack{\textbf{Vector}\\\textbf{Emb.}}}
& \textit{Global Selection} & & & & & & \\
& \quad K-Center & 59.00 & 58.16 & 79.14 & 74.63 & 83.17 & 79.45 \\
& \quad FL-RBF   & 76.45 & 68.71 & 85.43 & 80.64 & 87.95 & 84.05 \\
\cmidrule{2-8}
& \textbf{SGS (Ours)} & & & & & & \\
& \quad FL-RBF (k=0.4)
& 75.87 & 70.50 & 85.45 & 80.68 & 88.27 & 85.01 \\
& \quad \dlt{$\Delta$ vs FL-RBF}
& \dlt{-0.58} & \dlt{+1.79} & \dlt{+0.02} & \dlt{+0.04} & \dlt{+0.32} & \dlt{+0.96} \\
\addlinespace[1pt]
& \quad FL-RBF (k=0.6)
& 76.85 & 71.19 & 85.87 & 80.89 & 88.19 & 85.22 \\
& \quad \dlt{$\Delta$ vs FL-RBF}
& \dlt{+0.40} & \dlt{+2.48} & \dlt{+0.44} & \dlt{+0.25} & \dlt{+0.24} & \dlt{+1.17} \\
\bottomrule
\end{tabular}
}
\end{table}

\subsection{Additional Results on Relaxed Budgets}
Fig.~\ref{fig:relaxed budget} presents comparisons under more relaxed budgets (retaining up to 40\% of the original data). We select the best-performing pruning methods and further increase the budget to compare them. Results validate the effectiveness of \our{} at larger pruning ratios; nevertheless, we observe that the preference steering induced by different $K$ values diminish under higher budgets, particularly at 34\%. As analyzed in the main paper, ShapeNet55 contains fewer than 5\% few-shot cases. However, larger budgets increase the number of many-shot classes, making the prior mismatch term more dominant. This explains why our method remains competitive \textit{\textbf{on the mAcc}} under this setting, as it mitigates this effect through calibrated teacher soft-labels and geometric embedding distillation.
\begin{table}[H]
\centering
\caption{Additional results on PointNeXt and PointVector.}
\vspace{-0.5em}
\label{tab:pointnext_pointvector}
\resizebox{0.88 \textwidth}{!}{
\begin{tabular}{l cc cc cc cc cc cc}
\toprule
& \multicolumn{6}{c}{\textbf{PointNeXt-S}} & \multicolumn{6}{c}{\textbf{PointVector-S}} \\
& \multicolumn{2}{c}{$m=550$} & \multicolumn{2}{c}{$m=1100$} & \multicolumn{2}{c}{$m=2200$}
& \multicolumn{2}{c}{$m=550$} & \multicolumn{2}{c}{$m=1100$} & \multicolumn{2}{c}{$m=2200$} \\
\cmidrule(lr){2-3} \cmidrule(lr){4-5} \cmidrule(lr){6-7}
\cmidrule(lr){8-9} \cmidrule(lr){10-11} \cmidrule(lr){12-13}
\textbf{Method} & OA & mAcc & OA & mAcc & OA & mAcc & OA & mAcc & OA & mAcc & OA & mAcc \\
\midrule
Loss     & 71.32 & 31.82 & 75.90 & 41.46 & 78.38 & 40.01 & 71.75 & 28.02 & 74.71 & 34.17 & 76.35 & 35.37 \\
GradNorm & 76.75 & 37.96 & 79.36 & 43.69 & 82.25 & 53.49 & 80.25 & 49.30 & 82.76 & 56.60 & 84.31 & 60.24 \\
Entropy  & 72.01 & 30.96 & 74.98 & 34.71 & 78.91 & 41.62 & 71.99 & 31.82 & 75.93 & 38.64 & 78.05 & 42.24 \\
EL2N     & 71.71 & 31.69 & 76.57 & 36.99 & 78.04 & 41.45 & 70.47 & 26.15 & 74.42 & 30.29 & 78.29 & 37.66 \\
DRoP         & 50.11 & 59.31 & 59.05 & 68.10 & 64.36 & 72.57 & 42.30 & 56.77 & 52.87 & 63.24 & 59.56 & 69.99 \\
Balance-EL2N        & 80.51 & 74.92 & 83.55 & 78.59 & 85.90 & 82.18 & 80.23 & 75.34 & 82.11 & 76.56 & 84.92 & 80.22 \\
NUCS-EL2N        & 78.01 & 71.94 & 81.76 & 76.68 & 85.48 & 80.94 & 77.63 & 71.99 & 81.53 & 77.58 & 84.60 & 80.56 \\
CCS-CP-EL2N      & 79.80 & 52.77 & 82.40 & 59.20 & 83.90 & 59.74 & 79.44 & 53.41 & 81.56 & 57.67 & 83.96 & 62.23 \\
Herding  & 83.05 & 59.61 & 85.06 & 64.00 & 87.44 & 72.21 & 81.93 & 57.13 & 84.97 & 67.28 & 87.10 & 71.25 \\
K-Center & 59.77 & 60.96 & 76.38 & 69.32 & 85.99 & 76.04 & 56.24 & 62.95 & 69.47 & 67.99 & 84.21 & 76.92 \\
FL-RBF   & 82.51 & 74.35 & 83.45 & 77.78 & 86.10 & 80.92 & 83.13 & 74.20 & 85.41 & 76.45 & 86.20 & 80.32 \\
\midrule
\textbf{\our{} (Ours)} & & & & & & & & & & & & \\
\quad FL-RBF (k=0.4) & \textbf{85.50} & 75.75 & \textbf{86.80} & 78.80 & \textbf{88.72} & 81.02 & \textbf{84.78} & 76.11 & \textbf{87.41} & 79.48 & \textbf{88.91} & 82.96 \\
\quad FL-RBF (k=0.6) & 83.68 & \textbf{76.51} & 86.04 & \textbf{80.54} & 88.35 & \textbf{83.51} & 84.07 & \textbf{76.46} & 86.29 & \textbf{80.43} & 88.23 & \textbf{82.98} \\
\bottomrule
\end{tabular}
}
\end{table}

\end{document}